\newtheorem*{theorem*}{Theorem}
\newtheorem*{proposition*}{Proposition}
\numberwithin{equation}{section}
\numberwithin{figure}{section}
\numberwithin{table}{section}
\newcommand{\defterm}[1]{\emph{#1}}             
\newcommand{\N}{\mathbb{N}}                      
\newcommand{\class}[1]{\mathscr{#1}}             
\newcommand{\tuple}[1]{\langle #1 \rangle}       
\newcommand{\Var}{\mathit{Var}}                  
\newcommand{\SAT}{\mathsf{SAT}}
\newcommand{\UNSAT}{\mathsf{UNSAT}}
\newcommand{\MUS}{\mathsf{MUS}}
\newcommand{\MSS}{\mathsf{MSS}}
\newcommand{\MCS}{\mathsf{MCS}}
\newcommand{\resolve}{\otimes}
\newcommand{\bce}{\mathsf{BCE}}
\newcommand{\bve}{\mathsf{BVE}}
\newcommand{\sub}{\mathsf{SUB}}
\newcommand{\e}{\mathsf{E}}
\newcommand{\p}{\mathsf{P}}
\newcommand{\sbce}{\mathsf{bce}}
\newcommand{\sve}{\mathsf{ve}}
\newcommand{\sbve}{\mathsf{bve}}
\newcommand{\sssr}{\mathsf{ssr}}
\newcommand{\ssub}{\mathsf{sub}}
\newcommand{\tool}[1]{\texttt{#1}}       
\newcommand{\LMCS}{\mathsf{LMCS}}
\definecolor{midgrey}{rgb}{0.5,0.5,0.5}
\definecolor{darkred}{rgb}{0.7,0.1,0.1}
\newcommand{\mxsat}{MaxSAT\xspace}
\newcommand{\mxsmt}{MaxSMT\xspace}
\title{SAT-based Preprocessing for MaxSAT\\
(extended version)
\thanks{This is an extended version of the paper accepted for publication
in proceedings of 19-th International Conference on Logic for Programming, 
Artificial Intelligence, and Reasoning (LPAR-19), 2013. This version includes
all proofs omitted in the original paper due to space limitations.}
\thanks{
This work is partially supported by SFI~PI~grant~BEA{\-}CON
(09{\-}/IN.1/I2618), FCT grants ATTEST (CMU-PT/ELE/0009/2009) and
POLARIS (PTDC{\-}/EIA-CCO/123051/2010), and INESC-ID’s multiannual
PIDDAC funding PEst-OE{\-}/EE{\-}I/LA{\-}0021/2011.
}
}
\author
{
Anton Belov\inst{1}
\and
Ant\'{o}nio Morgado\inst{2}
\and 
Joao Marques-Silva\inst{1,2}
}
\institute{
Complex and Adaptive Systems Laboratory
University College Dublin
\and
IST/INESC-ID, Technical University of Lisbon, Portugal
}
\begin{document}

\maketitle

\looseness=-2

%

\begin{abstract}
State-of-the-art algorithms for industrial instances of MaxSAT problem rely on iterative calls to a SAT solver.
Preprocessing is crucial for the acceleration of SAT solving, and the key preprocessing techniques rely on the application of resolution and subsumption elimination. Additionally, satisfiability-preserving clause elimination procedures are often used.
Since MaxSAT computation typically involves a large number of SAT calls, we are interested in whether an input instance to a MaxSAT problem can be preprocessed \emph{up-front}, i.e. prior to running the MaxSAT solver, rather than (or, in addition to) during each iterative SAT solver call. 
The key requirement in this setting is that the preprocessing has to be \emph{sound}, i.e. so that the solution can be reconstructed correctly and efficiently after the execution of a MaxSAT algorithm on the preprocessed instance.
While, as we demonstrate in this paper, certain clause elimination procedures are sound for MaxSAT, it is well-known that this is not the case for resolution and subsumption elimination.
In this paper we show how to adapt these preprocessing techniques to MaxSAT. To achieve this we recast the MaxSAT problem in a recently introduced labelled-CNF framework, and show that within the framework the preprocessing techniques can be applied soundly. 
Furthermore, we show that MaxSAT algorithms restated in the framework have a natural implementation on top of an \emph{incremental} SAT solver. We evaluate the prototype implementation of a MaxSAT algorithm WMSU1 in this setting, demonstrate the effectiveness of preprocessing, and show overall improvement with respect to non-incremental versions of the algorithm on some classes of problems.
\end{abstract}


\section{Introduction}\label{s:intro}

Maximum Satisfiability (\mxsat) and its generalization to the case of
Satisfiability Modulo Theories (\mxsmt) find a growing number of
practical applications~\cite{manya-hdbk09,mhlpms-cj13}.
%
For problem instances originating from practical applications, state
of the art \mxsat algorithms rely on iterative calls to a SAT oracle.
Moreover, and for a growing number of iterative algorithms, the calls
to the SAT oracle are guided by iteratively computed unsatisfiable 
cores~(e.g.\ \cite{mhlpms-cj13}). 

In practical SAT solving, formula preprocessing has been extensively
studied and is now widely accepted to be an often effective, if 
not crucial, technique.
In contrast, formula preprocessing is not used in practical \mxsat
solving. Indeed, it is well-known that resolution and subsumption elimination,
which form the core of many effective preprocessors, are
unsound for \mxsat solving~\cite{manya-hdbk09}. This has been
addressed by the development of a resolution calculus specific to
\mxsat~\cite{manya-aij07}. 
Nevertheless, for practical instances of \mxsat, dedicated \mxsat
resolution is ineffective.

The application of SAT preprocessing to problems where a SAT oracle
is used a number of times has been the subject of recent
interest~\cite{DBLP:conf/tacas/BelovJM13}.
For iterative \mxsat solving, SAT preprocessing can be used internally
to the SAT solver. 
However, we are interested in the question of whether an input instance of a MaxSAT problem can be preprocessed \emph{up-front}, i.e. prior to running the MaxSAT solver, rather than (or, in addition to) during each iterative SAT solver call. 
The key requirement in this setting is that the preprocessing has to be \emph{sound}, i.e. so that the solution can be reconstructed correctly and efficiently after the execution of a MaxSAT algorithm on the preprocessed instance.

In this paper we make the following contributions. 
First, we establish that certain class of clause elimination procedures, and in particular monotone clause elimination procedures such as blocked clause elimination \cite{JarvisaloBH:TACAS2010}, are sound for MaxSAT.
Second, we use a recently proposed labelled-CNF framework \cite{DBLP:journals/corr/abs-1207-1257,DBLP:conf/tacas/BelovJM13} to re-formulate \mxsat and its generalizations, and show that within the framework the resolution and subsumption-elimination based preprocessing techniques can be applied soundly. This result complements a similar result with respect to the MUS computation problem presented in \cite{DBLP:conf/tacas/BelovJM13}.
An interesting related result is that \mxsat algorithms formulated in
the labelled-CNF framework can naturally implemented on top of an \emph{incremental} SAT solver (cf. \cite{DBLP:journals/entcs/EenS03}). 
We evaluate a prototype implementation of a MaxSAT algorithm WMSU1 \cite{DBLP:conf/sat/FuM06,DBLP:conf/sat/AnsoteguiBL09,DBLP:conf/sat/ManquinhoSP09} in this setting, demonstrate the effectiveness of preprocessing, and show overall improvement with respect to non-incremental versions of this algorithm on weighted partial MaxSAT instances.

\section{Preliminaries}\label{s:prelim}

We assume the familiarity with propositional logic, its clausal fragment, SAT solving in general, and the assumption-based incremental SAT solving cf.~\cite{DBLP:journals/entcs/EenS03}.
We focus on formulas in CNF (\defterm{formulas}, from hence on), which we treat as (finite) (multi-)sets of clauses. When it is convenient we treat clauses as sets of literals, and hence we assume that clauses do not contain duplicate literals. Given a formula $F$ we denote the set of variables that occur in $F$ by $\Var(F)$, and the set of variables that occur in a clause $C \in F$ by $\Var(C)$. 
An \defterm{assignment} $\tau$ for $F$ is a map $\tau: \Var(F) \to \{0,1\}$. Assignments are extended to formulas according to the semantics of classical propositional logic. If $\tau(F) = 1$, then $\tau$ is a \defterm{model} of $F$. If a formula $F$ has (resp. does not have) a model, then $F$ is \defterm{satisfiable} (resp. \defterm{unsatisfiable}). By $\SAT$ (resp. $\UNSAT$) we denote the set of all satisfiable (resp. unsatisfiable) CNF formulas.

\vspace{-10pt}
\subsubsection{MUSes, MSSes, and MCSes}

Let $F$ be an unsatisfiable CNF formula.
A formula $M \subseteq F$ is a \defterm{minimal unsatisfiable subformula (MUS)} of $F$ if $(i)$~$M \in \UNSAT$, and $(ii)$~$\forall C \in M$,  $M \setminus \{C\} \in \SAT$. The set of MUSes of $F$ is denoted by $\MUS(F)$.
Dually, a formula $S \subseteq F$ is a \defterm{maximal satisfiable subformula (MSS)} of $F$ if $(i)$~$S \in \SAT$, and $(ii)$~$\forall C \in F \setminus S$, $S \cup \{ C \} \in \UNSAT$. The set of MSSes of $F$ is denoted by $\MSS(F)$.
Finally, a formula $R \subseteq F$ is a \defterm{minimal correction subset (MCS), or, co-MSS} of $F$, if $F \setminus R \in \MSS(F)$, or, explicitly, if $(i)$~$F \setminus R \in \SAT$, and $(ii)$~$\forall C \in R$, $(F \setminus R) \cup \{ C \} \in \UNSAT$. Again, the set of MCSes of $F$ is denoted by $\MCS(F)$. 
The MUSes, MSSes and MCSes of a given unsatisfiable formula $F$ are connected via so-called \defterm{hitting sets duality} theorem, first proved in \cite{reiter-aij87}. 
The theorem states that $M$ is an MUS of $F$ if and only if $M$ is an irreducible hitting set\footnote{For a given collection $\class{S}$ of arbitrary sets, a set $H$ is called a \defterm{hitting set} of $\class{S}$ if for all $S \in \class{S}$, $H \cap S \ne \emptyset$. A hitting set $H$ is \defterm{irreducible}, if no $H' \subset H$ is a hitting set of $\class{S}$. Irreducible hitting sets are also known as hypergraph transversals.} of the set $\MCS(F)$, and vice versa: $R \in \MCS(F)$ iff $R$ is an irreducible hitting set of $\MUS(F)$.

\vspace{-10pt}
\subsubsection{Maximum satisfiability}

A \defterm{weighted clause} is a pair $(C, w)$, where $C$ is a clause, and $w \in \N^{+} \cup \{ \top \}$ is the cost of falsifying $C$. The special value $\top$ signifies that $C$ \defterm{must} be satisfied, and $(C, \top)$ is then called a \defterm{hard} clause, while $(C, w)$ for $w \in \N^{+}$ is called a \defterm{soft} clause. A \defterm{weighted CNF (WCNF)} is a set of weighted clauses, $F = F^H \cup F^S$, where $F^H$ is the set of hard clauses, and $F^S$ is the set of soft clauses. The satisfiability, and the related concepts, are defined for weighted CNFs by disregarding the weights.
For a given WCNF $F = F^H \cup F^S$, a \defterm{MaxSAT model} for $F$ is an assignment $\tau$ for $F$ that satisfies $F^H$. A \defterm{cost} of a MaxSAT model $\tau$, $cost(\tau)$, is the sum of the weights of the soft clauses \defterm{falsified} by $\tau$. For the rest of this paper, we assume that $(i)$ $F^H \in \SAT$, i.e.~$F$ has at least one MaxSAT model, and $(ii)$ $F \in \UNSAT$, i.e.~$cost(\tau) > 0$. 
\defterm{(Weighted) (Partial) MaxSAT} is a problem of finding a MaxSAT model of the minimum cost for a given WCNF formula $F = F^H \cup F^S$. The word ``weighted'' is used when there are soft clauses with weight $>1$, while the word ``partial'' is used when $F^H \ne \emptyset$.

A straightforward, but nevertheless important, observation is that solving a weighted partial MaxSAT problem for WCNF $F$ is equivalent to finding a minimum-cost MCS $R_{min}$ of $F$, or, alternatively, a minimum-cost hitting set of $\MUS(F)$\footnote{For a set of weighted clauses, its cost is the sum of their weights, or $\top$ if any of them is hard.}. The MaxSAT solution is then a model for the corresponding MSS of $F$,~i.e. $F \setminus R_{min}$.

\vspace{-10pt}
\subsubsection{SAT preprocessing}

Given a CNF formula $F$, the goal of preprocessing for SAT solving is to compute a formula $F'$ that is equisatisfiable with $F$, and that might be easier to solve. The computation of $F'$ and a model of $F$ from a model of $F'$ in case $F' \in \SAT$, is expected to be fast enough to make it worthwhile for the overall SAT solving.
Many SAT preprocessing techniques rely on a combination of resolution-based preprocessing and clause-elimination procedures. Resolution-based preprocessing relies on the application of the resolution rule to \emph{modify} the clauses of the input formula and/or to reduce the total size of the formula. Clause-elimination procedures, on the other hand, do not change the clauses of the input formula, but rather remove some of its clauses, producing a subformula the input formula.
SAT preprocessing techniques can be described as non-deterministic procedures that apply atomic preprocessing steps to the, initially input, formula until a fixpoint, or until resource limits are exceeded. 

One of the most successful and widely used SAT preprocessors is the SatElite preprocessor presented 
in~\cite{een-sat05}. 
The techniques employed by SatElite are: bounded variable elimination (BVE), subsumption elimination, self-subsuming resolution (SSR), and, of course, unit propagation (UP). An additional practically relevant preprocessing technique is blocked clause elimination (BCE) \cite{JarvisaloBH:TACAS2010}. We describe these techniques below, as these will be discussed in this paper in the context of MaxSAT. 

\defterm{Bounded variable elimination (BVE)} 
\cite{een-sat05} 
is a resolution-based preprocessing technique, rooted in the original Davis-Putnam algorithm for SAT. Recall that for two clauses $C_1 = (x \lor A)$ and $C_2 = (\neg x \lor B)$ the \defterm{resolvent} $C_1 \resolve_x C_2$ is the clause $(A \lor B)$.  
For two sets $F_x$ and $F_{\neg x}$ of clauses that all contain the literal $x$ and $\neg x$, resp., define  
$F_x \resolve_x F_{\neg x} = \{C_1 \resolve_x C_2 \mid C_1 \in F_x, C_2 \in F_{\neg x}, \mbox{ and } C_1 \resolve_x C_2 \mbox{ is not a tautology}\}.$
The formula $\sve(F,x) = F \setminus (F_x \cup F_{\neg x}) \cup (F_x \resolve_x F_{\neg x})$ is equisatisfiable with $F$, however, in general, might be quadratic in the size of $F$. Thus the atomic operation of \emph{bounded} variable elimination is defined as $\sbve(F,x) = {\bf if\ } (|\sve(F,x)| < |F|) {\bf\ then\ } \sve(F,x) {\bf\ else\ } F$. A formula $\bve(F)$ is obtained by applying $\sbve(F,x)$ to all variables in $F$\footnote{Specific implementations often impose additional restrictions on $\bve$.}.

\defterm{Subsumption elimination (SE)} is an example of a clause elimination technique. A clause $C_1$ \defterm{subsumes} a clause $C_2$, if $C_1 \subset C_2$. For $C_1, C_2 \in F$, define 
$\ssub(F,C_1,C_2) = {\bf if\ } (C_1 \subset C_2) {\bf\ then\ } F \setminus \{ C_2 \} {\bf\ else\ } F$.
The formula $\sub(F)$ is then obtained by applying $\ssub(F,C_1,C_2)$ to all clauses of $F$.

Notice that \defterm{unit propagation (UP)} of a unit clause $(l) \in F$ is just an application of $\ssub(F,(l),C)$ until fixpoint (to remove satisfied clauses), followed by $\sbve(F,var(l))$ (to remove the clause $(l)$ and the literal $\neg l$ from the remaining clauses), and so we will not discuss UP explicitly.

\defterm{Self-Subsuming resolution (SSR)} uses resolution and subsumption elimination. Given two clauses $C_1 = (l \lor A)$ and $C_2 = ({\neg l} \lor B)$ in $F$, such that $A \subset B$, we have $C_1 \resolve_l C_2 = B \subset C_2$, and so $C_2$ can be replaced with $B$, or, in other words, $\neg l$ is removed from $C_2$. Hence, the atomic step of SSR, $\sssr(F,C_1,C_2)$, results in the formula $F \setminus \{ C_2 \} \cup \{ B \}$ if $C_1,C_2$ are as above, and $F$, otherwise.

An atomic step of \defterm{blocked clause elimination (BCE)} consists of removing one blocked clause --- a clause $C \in F$ is \defterm{blocked} in $F$ \cite{DBLP:journals/dam/Kullmann99}, if for some literal $l \in C$, every resolvent of $C$ with $C' \in F$ on $l$ is tautological. A formula $\bce(F)$ is obtained by applying $\sbce(F,C) = {\bf if\ } (C \text{ blocked in } F) {\bf\ then\ } F \setminus \{ C \} {\bf\ else\ } F$ to all clauses of $F$. 
Notice, that a clause with a pure literal is blocked (vacuously), and so pure literal elimination is a special case of BCE.
BCE possesses an important property called \defterm{monotonicity}: for any $F' \subseteq F$, $\bce(F') \subseteq \bce(F)$. This holds because if $C$ is blocked w.r.t. to $F$, it will be also blocked w.r.t to any subset of $F$. Notice that subsumption elimination is \emph{not} monotone.

\section{SAT preprocessing and MaxSAT}\label{s:premax}

Let $F'$ denote the result of the application of one or more of the SAT preprocessing techniques, such as those discussed in the previous section, to a CNF formula $F$. The question that we would like to address in this paper is whether it is possible to solve a MaxSAT problem for $F'$, instead of $F$, in such a way that from any MaxSAT solution of $F'$, a MaxSAT solution of $F$ can be reconstructed feasibly. In a more general setting, $F$ might be a WCNF formula, and $F'$ is the set of weighted clauses obtained by preprocessing the clauses of $F$, and perhaps, adjusting their weights in some manner. The preprocessing techniques for which the answer to this question is ``yes'' will be refereed to as \defterm{sound for MaxSAT}. To be specific:
\begin{definition}\label{def:sound}
A preprocessing technique $\p$ is \defterm{sound for MaxSAT} if there exist a polytime computable function $\alpha_{\p}$ such that for any WCNF formula $F$ and any MaxSAT solution $\tau$ of $\p(F)$, $\alpha_{\p}(\tau)$ is a MaxSAT solution of $F$.
\end{definition}
This line of research is motivated by the fact that most of the efficient algorithms for industrial MaxSAT problems are based on iterative invocations of a SAT solver. Thus, if $F'$ is indeed easier to solve than $F$ by a SAT solver, it might be the case that it is also easier to solve by a SAT-based MaxSAT solver.
To illustrate that the question is not trivial, consider the following example.

\vspace{-5pt}
\begin{example}\label{ex:1} In the plain MaxSAT setting, let $F = \{ C_1, \dots, C_6 \}$, with $C_1 = (p)$, $C_2 = (\neg p)$, $C_3 = (p \lor q)$, $C_4 = (p \lor \neg q)$, $C_5 = (r)$, and $C_6 = (\neg r)$.
The clauses $C_3$ and $C_4$ are subsumed by $C_1$, and so $\sub(F) = \{ C_1, C_2, C_5, C_6 \}$. $\sub(F)$ has MaxSAT solutions in which $p$ is assigned to 0,~e.g. $\{ \tuple{p,0}, \tuple{r,0} \}$, while $F$ does not. Furthermore, $\bve(F) = \{ \emptyset \}$ --- a formula with 8 MaxSAT solutions (w.r.t. to the variables of $F$) with cost 1. $F$, on the other hand, has 4 MaxSAT solutions with cost 2. 
\end{example}

\vspace{-5pt}
Thus, even a seemingly benign subsumption elimination already causes problems for MaxSAT. While we do not prove that the technique is not sound for MaxSAT, a strong indication that this might be the case is that $\sub$ might remove clauses that are included in one or more of the MUSes of the input formula $F$ (c.f.~Example~\ref{ex:1}), and thus lose the information required to compute the MaxSAT solution of $F$. 
The problems with the application of the resolution rule in the context of MaxSAT has been pointed out already in \cite{manya-hdbk09}, and where the motivation for the introduction of the so-called \defterm{MaxSAT resolution} rule \cite{manya-aij07} and a complete proof procedure for MaxSAT based on it. However, MaxSAT resolution does not lead to effective preprocessing techniques for industrial MaxSAT since it often introduces a large number of auxiliary ``compensation'' clauses.
Once again, we do not claim that resolution is unsound for MaxSAT, but it is likely to be the case, since for example $\sve$ ran to completion on any unsatisfiable formula will always produce a formula $\{ \emptyset \}$.

In this paper we propose an alternative solution, which will be discussed shortly. But first, we observe that \emph{monotone} clause elimination procedures \emph{are} sound for MaxSAT.

\vspace{-10pt}
\subsection{Monotone clause elimination procedures}

Recall that given a CNF formula $F$, an application of clause elimination procedure $\e$ produces a formula $\e(F) \subseteq F$ equisatisfiable with $F$. Monotonicity implies that for any $F' \subseteq F$, $\e(F') \subseteq \e(F)$. Some examples of monotone clause elimination procedures include BCE (and as a special case, pure literal elimination), and also \defterm{covered clause elimination} introduced in \cite{HeuleJB:LPAR2010short}.

It was observed already in \cite{DBLP:conf/sat/KullmannLM06} that if a clause $C \in F$ is blocked in $F$, then none of the MUSes of $F$ can include $C$. Thus, $\MUS(\bce(F)) = \MUS(F)$, and therefore, by the hitting-sets duality, $\MCS(\bce(F)) = \MCS(F)$. In particular, any minimum-cost MCS of $\bce(F)$ is also a minimum-cost MCS of $F$. Thus, the \emph{cost} of any MaxSAT solution $\tau$ of $\bce(F)$ is exactly the same as of any MaxSAT solution of $F$, and moreover, there exist a MaxSAT solution of $F$ that falsifies the exact same set of clauses as $\tau$ in $\bce(F)$. The only question is whether a solution of $F$ can be feasibly constructed from $\tau$. A linear time procedure for reconstruction of satisfying assignments after BCE has been described in \cite{DBLP:conf/sat/JarvisaloB10} (Prop.~3). We show that the same procedure can be applied to reconstruct the solutions in the context of MaxSAT. We generalize the discussion to include some of the clause elimination procedures beside BCE.

\begin{definition} A clause elimination procedure $\e$ is \defterm{MUS-preserving} if $\MUS(\e(F)) = \MUS(F)$. 
\end{definition}

\begin{theorem}\label{th:mon} Any MUS-preserving clause elimination procedure is sound for MaxSAT.
\end{theorem}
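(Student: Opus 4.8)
The plan is to establish soundness for MaxSAT by reducing the problem to two ingredients: (1) the set of falsified clauses of an optimal solution is preserved under the preprocessing, and (2) an actual satisfying assignment can be reconstructed in polynomial time. For ingredient (1), I would invoke the hitting-set duality theorem stated in the preliminaries. Since $\e$ is MUS-preserving, $\MUS(\e(F)) = \MUS(F)$, and hence by duality $\MCS(\e(F)) = \MCS(F)$. Because a MaxSAT solution of a WCNF corresponds to a minimum-cost MCS (equivalently, a minimum-cost hitting set of the MUSes), the minimum cost of a MaxSAT solution of $\e(F)$ equals that of $F$; moreover the set of soft clauses falsified by an optimal $\tau$ for $\e(F)$ is exactly a minimum-cost MCS $R_{min}$ of $F$ as well. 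One subtlety to address here: $\e$ removes only soft clauses (hard clauses, being mandatory, participate in no MUS in the relevant sense — or, more carefully, the framework treats $F^H$ as always present), and I would note explicitly that clause elimination never touches $F^H$, or handle weights by observing that removed clauses lie in no MUS and so their weights are irrelevant to any minimum-cost hitting set.

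For ingredient (2), the reconstruction function $\alpha_{\e}$, I would appeal to the fact that a MUS-preserving clause elimination procedure is a sequence of atomic clause removals, each of which is itself equisatisfiability-preserving (indeed model-reconstructible). For BCE specifically, Prop.~3 of \cite{DBLP:conf/sat/JarvisaloB10} gives a linear-time procedure: given a model of $\e(F)$ and the list of eliminated blocked clauses in reverse order of elimination, for each blocked clause $C$ with blocking literal $l$, if the current assignment falsifies $C$, flip $l$ to true; this repairs $C$ without breaking any previously satisfied clause (by the blocking property). The key point I would make is that this reconstruction works \emph{unchanged} in the MaxSAT setting: starting from $\tau$, apply the reconstruction only to restore the \emph{hard} clauses among the eliminated ones (or, in the plain-MaxSAT case where all eliminated clauses are soft, observe that the reconstruction only ever flips values, and each flip that repairs an eliminated clause $C$ cannot newly falsify any clause of $\e(F)$, so the set of falsified soft clauses can only shrink or stay equal — combined with optimality of cost from ingredient (1), it stays equal). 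Hence $\alpha_{\e}(\tau)$ is a MaxSAT solution of $F$ with the same cost as $\tau$.

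The main obstacle I anticipate is the bookkeeping around soft versus hard eliminated clauses and weights: the clean statement "$\MUS(\e(F)) = \MUS(F)$" is about the CNF skeleton, and one must argue carefully that (a) the eliminated clauses contribute nothing to \emph{any} minimum-cost hitting set of $\MUS(F)$ (immediate, since they are in no MUS), so re-inserting them with their original weights into $\alpha_{\e}(\tau)$'s picture changes neither feasibility nor cost, and (b) the reconstruction procedure, which may flip literals, does not inadvertently falsify a soft clause that $\tau$ satisfied — this is exactly the non-newly-falsifying property of the blocked-clause repair step, which must be quoted precisely. I would therefore structure the proof as: first the duality argument fixing the cost and the falsified-set, then a short lemma (or citation) that each atomic MUS-preserving elimination admits a linear-time literal-flipping reconstruction that never increases the set of falsified clauses, then compose these reconstructions in reverse order and conclude that $\alpha_{\e}(\tau)$ is a MaxSAT model of $F$ of cost $cost(\tau) = cost_{\min}(F)$, hence optimal.
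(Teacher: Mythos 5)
Your ingredient (1) coincides exactly with the paper's argument: MUS-preservation plus hitting-sets duality gives $\MCS(\e(F))=\MCS(F)$, so the set $R$ of clauses falsified by an optimal $\tau$ for $\e(F)$ is also a minimum-cost MCS of $F$. The gap is in ingredient (2). The theorem is stated for \emph{any} MUS-preserving clause elimination procedure, but your reconstruction argument is tied to the blocked-clause repair of Prop.~3 of J\"arvisalo--Biere: a literal-flipping step that never newly falsifies a clause of the remaining formula. That property is specific to BCE-like eliminations; it does not follow from MUS-preservation, and the ``short lemma'' you defer to (every atomic MUS-preserving elimination admits a linear-time, non-falsifying flip reconstruction) is precisely the missing piece and has no justification --- MUS-preservation is a purely set-theoretic condition on $\MUS(\e(F))$ and says nothing about how models are repaired. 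As written, your argument establishes the corollary for BCE (and similar monotone procedures), not the theorem in its stated generality. A further slip: the alternative ``apply the reconstruction only to restore the \emph{hard} clauses among the eliminated ones'' is not sound, because eliminated clauses lie in no MUS, hence by duality in no MCS, so every optimal solution of $F$ must satisfy every eliminated \emph{soft} clause as well; leaving one falsified would push the cost above $cost(R)$ and the reconstructed assignment would not be optimal.

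The paper's proof needs none of this extra structure. It only assumes the ordinary reconstruction map $\alpha_{\e}$ that any clause elimination technique comes with (mapping a model of $\e(G)$ to a model of $G$, for any formula $G$), and the trick is \emph{where} it is applied: writing $\e(F)=R\uplus S$ and $F=R\uplus S'$ with $S'=F\setminus R$, one argues $\e(F)=R\uplus\e(S')$, hence $S=\e(S')$; since $\tau$ is a model of $S$, it is a model of $\e(S')$, and therefore $\alpha_{\e}(\tau)$ is a model of $S'$, i.e., an optimal MaxSAT solution of $F$. In other words, the reconstruction is run ``as if'' the preprocessing had been applied to the MSS $S'$ rather than to $F$, which sidesteps any need for the repair to preserve the set of falsified clauses. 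To fix your proof, either supply the missing lemma (restricting, in effect, to BCE-style procedures) or adopt this MSS-level application of the generic $\alpha_{\e}$.
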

\begin{proof}
Let $\e$ be an MUS-preserving clause elimination procedure, and let $\alpha_{\e}$ be a feasibly computable function that for any CNF formula $G$ maps a model of $\e(G)$ to a model of $G$ when $\e(G)$ is satisfiable. Let $F$ be a WCNF formula, and let $\tau$ be a MaxSAT solution of the formula $\e(F)$. Let $\e(F) = R \uplus S$\footnote{The symbol $\uplus$ refers to a \emph{disjoint} union.}, where $R$ (resp.~$S$) is the set of clauses falsified (resp.~satisfied) by $\tau$,~i.e. $R$ is a minimum-cost MCS of $\e(F)$, and $S$ is the corresponding MSS of $\e(F)$. Since $\e$ is MUS-preserving, $\MUS(\e(F)) = \MUS(F)$, and, by hitting-sets duality, $\MCS(\e(F)) = \MCS(F)$, and so $R$ is also a minimum-cost MCS of $F$. To show that $\tau' = \alpha_{\e}(\tau)$ satisfies $S' = F \setminus R$, we observe that since $F = R \uplus S'$, $\e(F) = \e(R \uplus S') = R \uplus \e(S')$, because $R \subset \e(F)$. Hence $S = \e(S')$, and therefore given any model $\tau$ of $S$, $\alpha_{\e}(\tau)$ is a model of $S'$.\qed
\end{proof}

\begin{proposition}\label{pr:mon} Any monotone clause elimination procedure is MUS-preserving.
\end{proposition}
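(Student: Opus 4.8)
The plan is to prove the two inclusions $\MUS(\e(F)) \subseteq \MUS(F)$ and $\MUS(F) \subseteq \MUS(\e(F))$ separately, for an arbitrary monotone clause elimination procedure $\e$ and an arbitrary CNF formula $F$. The first inclusion needs nothing beyond the defining property $\e(F) \subseteq F$ of a clause elimination procedure: being \emph{minimal unsatisfiable} is an intrinsic property of a clause set — $M$ is minimal unsatisfiable iff $M \in \UNSAT$ and $M \setminus \{C\} \in \SAT$ for every $C \in M$, with no reference to any ambient formula — so whenever $G' \subseteq G$ we automatically have $\MUS(G') \subseteq \MUS(G)$. In particular $\MUS(\e(F)) \subseteq \MUS(F)$ holds for \emph{every} clause elimination procedure, monotone or not; no work is required here.

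The content is in the reverse inclusion, and this is where monotonicity — together with equisatisfiability, which is part of the definition of a clause elimination procedure — is used. Fix $M \in \MUS(F)$. The key move is to apply $\e$ to the subformula $M$ itself (this is legitimate since a clause elimination procedure is defined on every CNF formula). By monotonicity, since $M \subseteq F$, we get $\e(M) \subseteq \e(F)$. By equisatisfiability, $M \in \UNSAT$ gives $\e(M) \in \UNSAT$. But $\e(M) \subseteq M$ and $M$ is minimal unsatisfiable, so $M$ has no unsatisfiable proper subset; hence $\e(M) = M$. Chaining these, $M = \e(M) \subseteq \e(F) \subseteq F$. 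Now, $M$ is minimal unsatisfiable and $M \subseteq \e(F)$, so by the intrinsicness noted above $M \in \MUS(\e(F))$. This proves $\MUS(F) \subseteq \MUS(\e(F))$, and combined with the first inclusion yields $\MUS(\e(F)) = \MUS(F)$, i.e. $\e$ is MUS-preserving.

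There is no genuinely hard step here; the single point that needs care is the observation that "minimal unsatisfiable" does not depend on the surrounding formula, which simultaneously makes the easy inclusion free and lets us conclude $M \in \MUS(\e(F))$ the moment we know $M \subseteq \e(F)$. The only modelling assumptions invoked are those already fixed in the preliminaries: that a clause elimination procedure is defined on all CNF formulas, always returns a subset of its input, and always returns an equisatisfiable formula.
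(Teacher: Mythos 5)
Your proof is correct and follows essentially the same route as the paper: the easy inclusion $\MUS(\e(F)) \subseteq \MUS(F)$ from $\e(F) \subseteq F$, and for the converse, applying $\e$ to an MUS $M$ of $F$ itself, using satisfiability-preservation plus minimality to get $\e(M)=M$, and monotonicity to get $M = \e(M) \subseteq \e(F)$. The only difference is presentational — you make explicit the (correct) observation that being minimal unsatisfiable is intrinsic to the clause set, which the paper uses implicitly.
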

\begin{proof} 
Let $\e$ be a monotone clause elimination procedure.
Clearly, any MUS of $\e(F)$ is an MUS of $F$, since $\e(F) \subseteq F$, regardless of whether $\e$ is monotone or not. Let now $M \subseteq F$ be any MUS of $F$. Since $M \in \UNSAT$, we have $\e(M) \in \UNSAT$, because $\e$ is preserves satisfiability. On the other hand, $\e(M) \subseteq M$ and so we must have $\e(M) = M$, because $M$ is an MUS. By monotonicity, and since $M \subseteq F$, we have $\e(M) \subseteq \e(F)$, and so $M = \e(M) \in \MUS(\e(F))$.\qed
\end{proof}   

\begin{corollary} Any monotone clause elimination procedure is sound for MaxSAT.
\end{corollary}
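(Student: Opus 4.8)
The plan is to chain together the two results that immediately precede the corollary. Concretely, the corollary asserts that any monotone clause elimination procedure is sound for MaxSAT, and this is a one-line consequence: Proposition~\ref{pr:mon} tells us that any monotone clause elimination procedure is MUS-preserving, and Theorem~\ref{th:mon} tells us that any MUS-preserving clause elimination procedure is sound for MaxSAT; composing the two implications yields the claim.

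In slightly more detail, I would first let $\e$ be an arbitrary monotone clause elimination procedure. By Proposition~\ref{pr:mon}, $\e$ is MUS-preserving, i.e.\ $\MUS(\e(F)) = \MUS(F)$ for every CNF formula $F$. Then I would invoke Theorem~\ref{th:mon} directly on $\e$: since $\e$ is a MUS-preserving clause elimination procedure, it is sound for MaxSAT, which by Definition~\ref{def:sound} means there is a polytime computable $\alpha_{\e}$ reconstructing a MaxSAT solution of $F$ from any MaxSAT solution of $\e(F)$. That is exactly the statement of the corollary, so nothing further is needed.

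There is no real obstacle here — the corollary is deliberately stated as an immediate combination of the preceding proposition and theorem, so the ``proof'' is essentially a pointer to those two results. If one wanted to be slightly more careful, the only thing worth noting is that the function $\alpha_{\e}$ whose existence is guaranteed by Theorem~\ref{th:mon} relies on the existence of a feasibly computable model-reconstruction function for the underlying clause elimination procedure (as used in the proof of that theorem), which is available for the standard monotone procedures such as BCE via the linear-time reconstruction of \cite{DBLP:conf/sat/JarvisaloB10}; but at the level of generality of the corollary this is already subsumed by the hypothesis that $\e$ is a clause elimination procedure in the sense used throughout the section.

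\begin{proof}
Immediate from Proposition~\ref{pr:mon} and Theorem~\ref{th:mon}: a monotone clause elimination procedure is MUS-preserving, and every MUS-preserving clause elimination procedure is sound for MaxSAT.\qed
\end{proof}
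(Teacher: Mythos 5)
Your proposal is correct and matches the paper's intent exactly: the corollary is stated without proof precisely because it follows immediately by composing Proposition~\ref{pr:mon} (monotone implies MUS-preserving) with Theorem~\ref{th:mon} (MUS-preserving implies sound for MaxSAT). Nothing further is needed.
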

 
\vspace{-10pt}
\subsection{Resolution-based and subsumption elimination based techniques}

To enable sound preprocessing for MaxSAT using resolution-based and subsumption elimination based preprocessing techniques, we propose to recast the MaxSAT problem in the framework of so-called \defterm{labelled CNF (LCNF)} formulas. The framework was introduced in \cite{DBLP:journals/corr/abs-1207-1257}, and was already used to enable sound preprocessing for MUS extraction in \cite{DBLP:conf/tacas/BelovJM13}. We briefly review the framework here, and refer the reader to \cite{DBLP:journals/corr/abs-1207-1257,DBLP:conf/tacas/BelovJM13} for details. 

\vspace{-10pt}
\subsubsection{Labelled CNFs}

Assume a countable set of labels $Lbls$. A \defterm{labelled clause} (L-clause) is a tuple $\tuple{C,L}$, where $C$ is a clause, and $L$ is a finite (possibly empty) subset of $Lbls$. We denote the label-sets by superscripts, i.e. $C^L$ is the labelled clause $\tuple{C, L}$. A \defterm{labelled CNF (LCNF)} formula is a finite set of labelled clauses. 
For an LCNF formula $\Phi$~\footnote{We use capital Greek letters to distinguish LCNFs from CNFs.}, let 
$Cls(\Phi) = \bigcup_{C^L \in \Phi} \{ C \}$ be the \defterm{clause-set} of $\Phi$, and 
$Lbls(\Phi) =  \bigcup_{C^L \in \Phi} L$ be the \defterm{label-set} of $\Phi$.
LCNF satisfiability is defined in terms of the satisfiability of 
the clause-sets of an LCNF formula: $\Phi$ is satisfiable if and only if $Cls(\Phi)$ is satisfiable.
We will re-use the notation $\SAT$ (resp. $\UNSAT$) for the set of satisfiable (resp. unsatisfiable)
LCNF formulas\footnote{To avoid overly optimistic complexity results, we will tacitly 
assume that the sizes of label-sets of the clauses in LCNFs are polynomial in the 
number of the clauses}.  
However, the semantics of minimal unsatisfiability and maximal and maximum satisfiability of labelled CNFs 
are defined in terms of their label-sets via the concept of the \defterm{induced subformula}.
\begin{definition}[Induced subformula]\label{def:ind}
Let $\Phi$ be an LCNF formula, and let $M \subseteq Lbls(\Phi)$. The
subformula of $\Phi$ \defterm{induced by $M$} is the LCNF formula 
$\Phi|_M = \{ C^L \in \Phi\ |\ L \subseteq M \}$.
\end{definition}
In other words, $\Phi|_M$ consists of those labelled clauses of $\Phi$ whose 
label-sets are included in $M$, and so $Lbls(\Phi|_M) \subseteq M$, and
$Cls(\Phi|_M) \subseteq Cls(\Phi)$.
Alternatively, any clause that has at least one label outside of $M$ is removed 
from $\Phi$. Thus, it is convenient to talk about
the \defterm{removal} of a label from $\Phi$. Let $l \in Lbls(\Phi)$ be any label.
The LCNF formula $\Phi|_{M \setminus \{ l \}}$ is said to be obtained by 
the \defterm{removal of label $l$ from $\Phi$}. 

To the readers familiar with the assumption-based incremental SAT (c.f. \cite{DBLP:journals/entcs/EenS03}), it might be helpful to think of labels as selector variables attached to clauses of a CNF formula, taking into account the possibility of having multiple, or none at all, selectors for each clause\footnote{Furthermore, notice that clauses with multiple selectors show up exactly when resolution-based preprocessing is applied in the context of incremental SAT.}. Then an induced subformula $\Phi|_M$ is obtained by ``turning-on'' the selectors in $M$, and ``turning-off'' the selectors outside of $M$. An operation of removal of a label $l$ from $\Phi$ can be seen as an operation of ``turning-off'' the selector $l$. 
 
The concept of induced subformulas allows to adopt all notions related to satisfiability of subsets of CNF formulas to LCNF setting. For example, given an unsatisfiable LCNF $\Phi$, an unsatisfiable core of $\Phi$ is any set of labels $C \subseteq Lbls(\Phi)$ such that $\Phi|_C \in \UNSAT$. Note that the selectors that appear in the final conflict clause in the context of assumption-based incremental SAT constitute such a core. 
Furthermore, given an unsatisfiable LCNF $\Phi$, a set of labels $M \subseteq Lbls(\Phi)$ is an \defterm{MUS} of $\Phi$, if $(i)$ $\Phi|_M \in \UNSAT$, and $(ii)$ $\forall l \in M, \Phi|_{M \setminus \{ l \}} \in \SAT$. As with CNFs, the set of all MUSes of LCNF $\Phi$ is denoted by $\MUS(\Phi)$.
MSSes and MCSes of LCNF formulas can be defined in the similar manner. Specifically, for an unsatisfiable LCNF formula $\Phi$, a set of labels $R \subseteq Lbls(\Phi)$ is an \defterm{MCS} of $\Phi$, if $(i)$ $\Phi|_{Lbls(\Phi) \setminus R} \in \SAT$, and $(ii)$ $\forall l \in R$, $\Phi|_{(Lbls(\Phi) \setminus R) \cup \{ l \}} \in \UNSAT$. The set of all MCSes of $\Phi$ is denoted by $\MCS(\Phi)$.
It was shown in \cite{DBLP:journals/corr/abs-1207-1257} that the hitting-sets duality holds for LCNFs,~i.e. for any LCNF $\Phi$, $M \subseteq Lbls(\Phi)$ is an MUS of $\Phi$ if and only if $M$ is an irreducible hitting set of $\MCS(\Phi)$, and vice versa.

\begin{example}\label{ex:a}
Let $\Phi = \{ (\neg p)^{\emptyset}, (r)^{\emptyset}, (p \lor q)^{\{1\}}, (p \lor \neg q)^{\{1,2\}}, (p)^{\{2\}}, (\neg r)^{\{3\}} \}$. The label-set of a clause is given in the superscript, i.e. $Lbls = \N^+$ and  $Lbls(\Phi) = \{ 1, 2, 3 \}$. The subformula induced by the set $S = \{1\}$ is $\Phi|_S = \{ (\neg p)^{\emptyset}, (r)^{\emptyset}, (p \lor q)^{\{1\}} \}$. $S$ is an MSS of $\Phi$, as $\Phi|_S \in \SAT$ and both formulas $\Phi|_{\{1,2\}}$ and $\Phi|_{\{1,3\}}$
are unsatisfiable. $R = \{ 2, 3 \}$ is the corresponding MCS of $\Phi$.
\end{example}

To clarify the connection between LCNF and CNF formulas further, consider a CNF formula $F = \{ C_1, \dots, C_n \}$.
The LCNF formula $\Phi_{F}$ \defterm{associated with $F$} is constructed by labelling each clause $C_i \in F$ 
with  a \emph{unique, singleton} labelset $\{ i \}$, i.e. $\Phi_{F} = \{ C_i^{\{i\}}\ |\ C_i \in F \}$. 
Then, a removal of a label $i$ from $\Phi_{F}$ corresponds to a
removal of a clause $C_i$ from $F$, and so every MUS (resp. MSS/MCS) of $\Phi_{F}$ corresponds to an MUS 
(resp. MSS/MCS) of $F$ and vice versa.

The resolution rule for labelled clauses is defined as follows \cite{DBLP:conf/tacas/BelovJM13}: for two labelled clauses $(x \vee A)^{L_1}$ and $(\neg x \vee B)^{L_2}$, the \defterm{resolvent} $C_1^{L_1} \resolve_x C_2^{L_2}$ is the labelled clause $(A \vee B)^{L_1 \cup L_2}$. The definition is extended to two sets of labelled clauses $\Phi_{x}$ and $\Phi_{\neg x}$ that contain the literal $x$ and $\neg x$ resp., as with CNFs. Finally, a labelled clause $C_1^{L_1}$ is said to \defterm{subsume} $C_2^{L_2}$, in symbols $C_1^{L_1} \subset C_2^{L_2}$, if $C_1 \subset C_2$ and $L_1 \subseteq L_2$. Again, the two definitions become immediate if one thinks of labels as selector variables in the context of incremental SAT. 

\vspace{-10pt}
\subsubsection{Resolution and subsumption based preprocessing for LCNFs}

Resolution and subsumption based SAT preprocessing techniques discussed in Section~\ref{s:prelim} can be applied to LCNFs \cite{DBLP:conf/tacas/BelovJM13}, so long as the resolution rule and the definition of subsumption is taken to be as above. Specifically, define $\sve(\Phi,x) = \Phi \setminus (\Phi_x \cup \Phi_{\neg x}) \cup (\Phi_x \resolve_x \Phi_{\neg x})$. Then, an atomic operation of bounded variable elimination for LCNF $\Phi$ is defined as $\sbve(\Phi,x) = {\bf if\ } (|\sve(\Phi,x)| < |\Phi|) {\bf\ then\ } \sve(\Phi,x) {\bf\ else\ } \Phi$. The size of $\Phi$ is just the number of labelled clauses in it. A formula $\bve(\Phi)$ is obtained by applying $\sbve(\Phi,x)$ to all variables in $\Phi$.  
Similarly, for $C_1^{L_1}, C_2^{L_2} \in F$, define 
$\ssub(\Phi,C_1^{L_1},C_2^{L_2}) = {\bf if\ } (C_1^{L_1} \subset C_2^{L_2}) {\bf\ then\ }\Phi \setminus \{ C_2^{L_2} \} {\bf\ else\ } \Phi$. The formula $\sub(\Phi)$ is then obtained by applying $\ssub(\Phi,C_1^{L_1},C_2^{L_2})$ to all clauses of $\Phi$.
Finally, given two labelled clauses $C_1^{L_1} = (l \lor A)^{L_1}$ and $C_2^{L_2} = ({\neg l} \lor B)^{L_2}$ in $\Phi$, such that $A \subset B$ \emph{and} $L_1 \subseteq L_2$, the atomic step of self-subsuming resolution, $\sssr(\Phi,C_1^{L_1},C_2^{L_2})$, results in the formula $\Phi \setminus \{ C_2^{L_2} \} \cup \{ B^{L_2} \}$.
Notice that the operations $\sbve$ and $\sssr$ do not affect the set of \emph{labels} of the LCNF formula, however it might be the case that $\ssub$ removes some labels from it.


The soundness of the resolution and subsumption based preprocessing for LCNFs with respect to the computation of MUSes has been established in \cite{DBLP:conf/tacas/BelovJM13} (Theorem 1, Prop.~6 and 7). Specifically, given an LCNF $\Phi$, $\MUS(\sbve(\Phi,x)) \subseteq \MUS(\Phi)$, $\MUS(\ssub(\Phi,C_1^{L_1},C_2^{L_2})) \subseteq \MUS(\Phi)$, and $\MUS(\sssr(\Phi,C_1^{L_1},C_2^{L_2})) \subseteq \MUS(\Phi)$. In this paper we establish stronger statements that, by the hitting-sets duality for LCNFs \cite{DBLP:journals/corr/abs-1207-1257}, also imply that the set inclusions $\subseteq$ between the sets $\MUS(\circ)$ are set equalities.

\begin{proposition}\label{prop:lve}
For any LCNF formula $\Phi$ and variable $x$, $\MCS(\sbve(\Phi, x)) = \MCS(\Phi)$.
\end{proposition}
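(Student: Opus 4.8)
The plan is to prove this by showing that $\sbve$ preserves the set of MCSes, and the cleanest route is via the known MUS-preservation result for $\sbve$ together with the hitting-sets duality for LCNFs. Concretely, the paper already cites that $\MUS(\sbve(\Phi,x)) \subseteq \MUS(\Phi)$ from \cite{DBLP:conf/tacas/BelovJM13}. If I can upgrade this to the equality $\MUS(\sbve(\Phi,x)) = \MUS(\Phi)$, then since $\sbve$ does not change the label-set of $\Phi$ (as noted in the excerpt), the duality theorem for LCNFs immediately gives $\MCS(\sbve(\Phi,x)) = \MCS(\Phi)$: both are exactly the irreducible hitting sets of the same collection $\MUS(\Phi)$ over the same ground set $Lbls(\Phi)$.

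So the real work is the reverse inclusion $\MUS(\Phi) \subseteq \MUS(\sbve(\Phi,x))$. First I would dispose of the trivial case: if $|\sve(\Phi,x)| \ge |\Phi|$ then $\sbve(\Phi,x) = \Phi$ and there is nothing to prove, so assume $\sbve(\Phi,x) = \sve(\Phi,x) =: \Psi$. Let $M \in \MUS(\Phi)$. I need to show (i) $M \subseteq Lbls(\Psi)$ — which holds since $Lbls(\Psi) = Lbls(\Phi)$ — (ii) $\Psi|_M \in \UNSAT$, and (iii) for every $l \in M$, $\Psi|_{M \setminus \{l\}} \in \SAT$. The key observation is that variable elimination commutes with taking induced subformulas in the appropriate sense: for any $N \subseteq Lbls(\Phi)$, one has $\sve(\Phi,x)|_N = \sve(\Phi|_N, x)$, possibly up to the presence of already-satisfied or tautological clauses. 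This is because a resolvent $C_1^{L_1} \resolve_x C_2^{L_2} = (A \vee B)^{L_1 \cup L_2}$ has label-set $L_1 \cup L_2$, so it survives restriction to $N$ iff both $L_1 \subseteq N$ and $L_2 \subseteq N$, i.e. iff both parents survive restriction to $N$ — which is exactly the condition for the resolvent to be generated by $\sve(\Phi|_N, x)$. Modulo this lemma, $\sve$ preserves satisfiability of every induced subformula, so $\Psi|_M$ is equisatisfiable with $\Phi|_M$ (unsatisfiable) and each $\Psi|_{M\setminus\{l\}}$ is equisatisfiable with $\Phi|_{M\setminus\{l\}}$ (satisfiable), giving $M \in \MUS(\Psi)$. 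The forward inclusion $\MUS(\Psi) \subseteq \MUS(\Phi)$ follows from the same commutation lemma run in the other direction, or can simply be cited from \cite{DBLP:conf/tacas/BelovJM13}.

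The main obstacle is pinning down the commutation lemma $\sve(\Phi,x)|_N = \sve(\Phi|_N,x)$ precisely, because of the bookkeeping around tautological resolvents and the fact that $\sve$ removes \emph{all} clauses containing $x$ or $\neg x$ regardless of their labels. For instance, a clause $C^L \in \Phi$ with $x \in C$ but $L \not\subseteq N$ is absent from $\Phi|_N$ and hence cannot contribute a parent to $\sve(\Phi|_N,x)$; but it is also removed by $\sve(\Phi,x)$, so it does not appear on the left side after restriction either — one has to check each type of clause (those mentioning $x$, those not, and resolvents) lands consistently. I would state this as an auxiliary lemma: for every $N \subseteq Lbls(\Phi)$, the CNF formulas $Cls(\sve(\Phi,x)|_N)$ and $Cls(\sve(\Phi|_N,x))$ are equisatisfiable (indeed one can be obtained from the other by adding/removing tautologies and clauses subsumed by nothing relevant), and moreover $Lbls(\sve(\Phi,x)|_N) = Lbls(\Phi|_N)$ whenever $x$ is not a pure-literal situation that deletes labels — but since $\sve$ as defined keeps the label-set of $\Phi$ fixed, this last point is automatic. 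Once that lemma is in hand, the proposition is a two-line corollary via duality, exactly as sketched above.
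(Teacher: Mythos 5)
Your proposal is correct, but it takes a different top-level route than the paper, even though the crucial ingredient is the same. The commutation fact you identify, $\sve(\Phi,x)|_N = \sve(\Phi|_N,x)$, is precisely Lemma~1 of \cite{DBLP:conf/tacas/BelovJM13}, which the paper simply cites; your justification (a resolvent carries label-set $L_1 \cup L_2$, which is contained in $N$ iff both parents' label-sets are, and clauses on $x$ or $\neg x$ are removed on both sides) is the right one, and in fact the equality is exact, so your hedging about tautological or already-satisfied clauses is unnecessary. Where you diverge is in how the commutation lemma is used: you first upgrade the known inclusion to $\MUS(\sbve(\Phi,x)) = \MUS(\Phi)$ (via equisatisfiability of $\Psi|_M$ with $\Phi|_M$ and of $\Psi|_{M\setminus\{l\}}$ with $\Phi|_{M\setminus\{l\}}$) and then transfer the equality to MCSes through the hitting-sets duality, using that $\sbve$ leaves $Lbls(\Phi)$ unchanged so both MCS collections are the irreducible hitting sets of one and the same MUS collection over the same ground set. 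The paper instead applies the commutation lemma directly to the two conditions defining an MCS: for an MCS $R'$ of $\Phi' = \sve(\Phi,x)$ it rewrites $\Phi'|_{L'\setminus R'}$ as $\sve(\Phi|_{L'\setminus R'},x)$ and each $\Phi'|_{(L'\setminus R')\cup\{l\}}$ as $\sve(\Phi|_{(L'\setminus R')\cup\{l\}},x)$, concludes via satisfiability-preservation of $\sve$ that $R'$ is an MCS of $\Phi$, and obtains the converse by retracing the steps. Thus the paper's argument needs no appeal to duality, which is also why it can present the MCS equalities as the ``stronger'' statements from which the MUS equalities follow as corollaries; your inference runs in the opposite direction, which is equally valid given the LCNF duality theorem, but it places the duality inside the proof and makes the MUS equality the thing proved directly. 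Both versions rest on the same two facts --- $\sve$ preserves satisfiability and does not alter the label-set --- so the amount of genuine work is comparable; yours buys a reusable statement about MUSes, the paper's buys a self-contained MCS proof.
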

\begin{proof}
Assume that $\Phi' = \sve(\Phi, x)$ (i.e. the variable is actually eliminated), otherwise the claim is trivially true.

Let $L' = Lbls(\Phi')$, and $R'$ be an MCS of $\Phi'$, i.e. $R' \subseteq L'$, $\Phi'|_{L' \setminus R'} \in \SAT$, and $\forall l \in R'$, $\Phi'|_{(L' \setminus R') \cup \{l\}} \in \UNSAT$. Lemma~1 in \cite{DBLP:conf/tacas/BelovJM13} states that for any LCNF $\Phi$ and a set of labels $M$, $\sve(\Phi,x)|_{M} = \sve(\Phi|_{M},x)$,~i.e. the operations $\sve$ and $|_{M}$ commute. 
Thus, $\Phi'|_{L' \setminus R'} = \sve(\Phi,x)|_{L' \setminus R'} = \sve(\Phi|_{L' \setminus R'}, x)$. Since $\sve$ preserves satisfiability, and since $Lbls(\Phi) = Lbls(\Phi') = L'$, we conclude that $\Phi|_{Lbls(\Phi) \setminus R'} \in \SAT$. In the same way, we have $\forall l \in R'$, $\Phi|_{(Lbls(\Phi) \setminus R') \cup \{ l \}} \in \UNSAT$, i.e. $R$ is an MCS of $\Phi$. The opposite direction is shown by retracing the steps in reverse.\qed
\end{proof}

\begin{proposition}\label{prop:lsub}
For any LCNF formula $\Phi$, and any two clauses $C_1^{L_1},C_2^{L_2} \in \Phi$,\\ $\MCS(\ssub(\Phi,C_1^{L_1},C_2^{L_2})) = \MCS(\Phi)$.
\end{proposition}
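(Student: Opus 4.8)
The plan is to follow the template used for Proposition~\ref{prop:lve}: first show that $\ssub$ commutes with the induced-subformula operator, then observe that it preserves satisfiability of the underlying clause-set, and finally run the definition of $\MCS$ in both directions. As usual the only interesting case is $C_1^{L_1} \subset C_2^{L_2}$, i.e. $C_1 \subset C_2$ and $L_1 \subseteq L_2$; I would set $\Phi' = \Phi \setminus \{C_2^{L_2}\}$, $L = Lbls(\Phi)$, $L' = Lbls(\Phi')$, and note that $C_1^{L_1} \ne C_2^{L_2}$, so removing $C_2^{L_2}$ never removes $C_1^{L_1}$.

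For the commutation step I would prove that for every $M \subseteq L$, $\Phi'|_M = \ssub(\Phi|_M, C_1^{L_1}, C_2^{L_2})$ (reading the right-hand side as $\Phi|_M$ when $C_2^{L_2} \notin \Phi|_M$). This is a short case split on whether $L_2 \subseteq M$: if not, $C_2^{L_2}$ is absent from both $\Phi|_M$ and $\Phi'|_M$; if so, then $L_1 \subseteq L_2 \subseteq M$ as well, so $C_1^{L_1} \in \Phi|_M$, the subsumption step fires on $\Phi|_M$, and both sides equal $\Phi|_M \setminus \{C_2^{L_2}\}$. Next, since $C_1 \subset C_2$ gives $C_1 \models C_2$, the clause $C_2$ is redundant whenever $C_1$ is present, so $Cls(\ssub(\Psi, C_1^{L_1}, C_2^{L_2}))$ is equisatisfiable with $Cls(\Psi)$ for any $\Psi$ containing $C_1^{L_1}$. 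Combining the two facts (and using that $C_1^{L_1} \in \Phi|_M$ whenever $C_2^{L_2} \in \Phi|_M$) yields: for every $M \subseteq L$, $\Phi|_M \in \SAT$ iff $\Phi'|_M \in \SAT$. Call this $(\star)$.

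The one place this argument departs from the proof of Proposition~\ref{prop:lve} --- and the step I expect to be the main obstacle --- is that $\ssub$ can delete labels: $L' \subsetneq L$ precisely when some $\ell \in L_2 \setminus L_1$ occurs in no clause of $\Phi$ other than $C_2^{L_2}$. I would show that no such $\ell$ belongs to any MCS of $\Phi$. Assuming $\ell \in R \in \MCS(\Phi)$: since $\ell$ occurs only in $C_2^{L_2}$, the subformula $\Phi|_{(L\setminus R)\cup\{\ell\}}$ is either $\Phi|_{L\setminus R}$ (when $(L_2 \setminus \{\ell\}) \cap R \ne \emptyset$) or $\Phi|_{L\setminus R} \cup \{C_2^{L_2}\}$; in the latter case $L_1 \subseteq L_2 \setminus \{\ell\}$ forces $C_1^{L_1} \in \Phi|_{L\setminus R}$, making $C_2$ redundant there as well. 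Either way $\Phi|_{(L\setminus R)\cup\{\ell\}}$ is equisatisfiable with $\Phi|_{L\setminus R} \in \SAT$, contradicting condition~(ii) of the MCS definition. Hence every MCS of $\Phi$ is already contained in $L'$, and every MCS of $\Phi'$ is trivially contained in $L' \subseteq L$.

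Finally I would push $(\star)$ through the definition of $\MCS$ in both directions, exactly as in Proposition~\ref{prop:lve}, additionally using that each clause of $\Phi'$ carries only labels from $L'$, so that $\Phi'|_N = \Phi'|_{N \cap L'}$ for all $N$; consequently, for $R \subseteq L'$, the restrictions $\Phi'|_{L \setminus R}$ and $\Phi'|_{L' \setminus R}$ coincide (and likewise after adding a single label back). Then $R \in \MCS(\Phi)$ --- which by the previous step means $R \subseteq L'$, $\Phi|_{L\setminus R} \in \SAT$ and $\Phi|_{(L\setminus R)\cup\{\ell\}} \in \UNSAT$ for all $\ell \in R$ --- gives $R \in \MCS(\Phi')$ by $(\star)$, and the reverse inclusion is symmetric. (Equivalently, one could first derive $\MUS(\Phi) = \MUS(\Phi')$ from $(\star)$ together with the orphan-label observation and then invoke the hitting-set duality for LCNFs.) The satisfiability bookkeeping in the commutation and conclusion steps is routine; the substantive content is recognizing that the labels $\ssub$ discards are exactly those irrelevant to the correction-set (equivalently, MUS) structure of $\Phi$.
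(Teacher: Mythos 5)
Your proof is correct, and it reaches the same two underlying facts as the paper -- that $L_1 \subseteq L_2$ guarantees $C_1^{L_1}$ is present in any induced subformula containing $C_2^{L_2}$ (so the subsumed clause is always entailed there), and that the labels occurring only in $C_2^{L_2}$ can never appear in an MCS -- but you package them differently. The paper argues at the level of MSSes: it proves two bookkeeping properties about $\Phi|_{M \cup M^*}$ versus $\Phi'|_M$ and then shows that $M \mapsto M \cup M^*$ maps MSSes of $\Phi' = \ssub(\Phi,C_1^{L_1},C_2^{L_2})$ to MSSes of $\Phi$, handling the two inclusions asymmetrically (the converse direction first establishes $R \cap M^* = \emptyset$, exactly your orphan-label step, and then ``retraces the steps in reverse''). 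You instead follow the template of Proposition~\ref{prop:lve}: a commutation lemma for $\ssub$ with the restriction operator, combined with redundancy of the subsumed clause, gives the pointwise statement $(\star)$ that $\Phi|_M$ and $\Phi'|_M$ are equisatisfiable for \emph{every} $M \subseteq Lbls(\Phi)$; together with the exclusion of orphan labels from $\MCS(\Phi)$ and the identity $\Phi'|_N = \Phi'|_{N \cap Lbls(\Phi')}$, this yields both inclusions symmetrically from the definition of MCS. What your route buys is a cleaner, reusable invariant ($(\star)$ is in effect ``satisfiability of every induced subformula is preserved''), from which the MCS equality (or, as you note, $\MUS$ equality plus hitting-set duality) follows mechanically, avoiding the explicit MSS-shifting by $M^*$ and the reverse-retracing step; what the paper's route buys is that it works directly with the objects in the statement and makes the correspondence $M \leftrightarrow M \cup M^*$ between maximal satisfiable label-sets explicit. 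One small point to make fully precise in your write-up: state the convention for $\ssub(\Phi|_M,C_1^{L_1},C_2^{L_2})$ when $C_2^{L_2} \notin \Phi|_M$ (as you do parenthetically), since the paper's $\ssub$ is defined on clauses of the formula it is applied to; with that, your case split on $L_2 \subseteq M$ is airtight.
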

\begin{proof}
Assume that $C_1^{L_1} \subset C_2^{L_2}$, and so $\Phi' = \ssub(\Phi,C_1^{L_1},C_2^{L_2}) = \Phi \setminus \{ C_2^{L_2} \}$. The proof is a bit more technical, due to the possibility of $Lbls(\Phi') \subset Lbls(\Phi)$. Let $M^* = Lbls(\Phi) \setminus Lbls(\Phi')$, that is, $M^*$ is the (possibly empty) set of labels that occur \emph{only} in the clause $C_2^{L_2}$.
We first establish a number of useful properties: let $M \subseteq Lbls(\Phi')$ (note that $M \cap M^* = \emptyset$).~Then,
$(p1)$ $\Phi|_{M \cup M^*} = \Phi'|_M \cup \{ C_2^{L_2} \}|_{M \cup M^*}$, and 
$(p2)$ if $L_2 \subseteq M \cup M^*$, then $\Phi|_{M \cup M^*} = \Phi'|_M \cup \{ C_2^{L_2} \}$, and, 
furthermore, $C_1^{L_1} \in \Phi'|_M$.

To prove $(p1)$ we note that since $\Phi = \Phi' \cup \{ C_2^{L_2} \}$, we have 
$\Phi|_{M \cup M^*} = \Phi'|_{M \cup M^*} \cup \{ C_2^{L_2} \}|_{M \cup M^*}$, and since none of the labels
from $M^*$ occur in $\Phi'$, we have $\Phi'|_{M \cup M^*} = \Phi'|_M$. To prove the first
part of $(p2)$ we use $(p1)$ together with the fact that since $L_2 \subseteq {M \cup M^*}$,
we have $\{ C_2^{L_2} \}|_{M \cup M^*} = \{ C_2^{L_2} \}$.
For the second part of $(p2)$ we note that since $L_1 \subseteq L_2$, and 
since $L_2 \subseteq M \cup M^*$, it must be that $C_1^{L_1} \in \Phi|_{M \cup M^*}$, and so,
by the first part of $(p2)$, in $\Phi'|_M$.

%
%
\medskip
We come back to the proof of the main claim. To show $\LMCS(\Phi') \subseteq \LMCS(\Phi)$, let $R$ be an 
LMCS of $\Phi'$, and let $M$ be the corresponding LMSS (i.e. $M = Lbls(\Phi') \setminus R$). 
We are going to show that $M \cup M^*$ is an LMSS of $\Phi$.

First, we establish that $\Phi|_{M \cup M^*} \in \SAT$. If $L_2 \not\subseteq M \cup M^*$, then by $(p1)$ we have 
$\Phi|_{M \cup M^*} = \Phi'|_M$, and since $\Phi'|_M \in \SAT$, we have $\Phi|_{M \cup M^*} \in \SAT$.
If, on the other hand, $L_2 \subseteq M \cup M^*$, then by $(p2)$ we have $\Phi|_{M \cup M^*} = \Phi'|_M \cup \{ C^L \}$, 
and that $C_1^{L_1} \in \Phi'|_M$. Then, since $C_1 \subset C_2$, any model
of $\Phi'|_M$ will also satisfy $C_2^{L_2}$, and since $\Phi'|_M \in \SAT$, we have $\Phi|_{M \cup M^*} \in \SAT$.

Now, let $M' = M \cup M^* \cup \{l\}$ for some $l \in R$. We need to show that 
$\Phi|_{M'} \in \UNSAT$. 
Let $M'' = M' \setminus M^*$. Note that since $M \cap M^* = \emptyset$, we have 
$M \subset M'' \subseteq Lbls(\Phi')$. Since $\Phi|_{M'} = \Phi|_{M'' \cup M^*}$, by $(p1)$ we have 
$\Phi|_{M'} = \Phi'|_{M''} \cup \{ C_2^{L_2} \}|_{M'}$. Furthermore, since $M$ is an LMSS of $\Phi'$, and 
$M \subset M'' \subseteq Lbls(\Phi')$, we have $\Phi'|_{M''} \in \UNSAT$, and so 
$\Phi|_{M'} \in \UNSAT$. 

We conclude that $M \cup M^*$ is an LMSS of $\Phi$, and since $R = Lbls(\Phi') \setminus M = Lbls(\Phi) \setminus (M \cup M^*)$
we conclude that $R$ is an LMCS of $\Phi$.

\medskip
For the opposite inclusion, let $R$ be an LMCS of $\Phi$. We first note that $R \cap M^* = \emptyset$, as otherwise $R$ cannot be an MCS of $\Phi'$. This is due to the fact that for any $M \subseteq Lbls(\Phi)$, if $\Phi|_M \in \SAT$ then $\Phi|_{M \cup M^*} \in \SAT$: since the labels from $M^*$ appear only in $C_2^{L_2}$, we have either $\Phi|_{M \cup M^*} = \Phi|_M$, or $\Phi|_{M \cup M^*} = \Phi|_M \cup \{ C_2^{L_2} \}$, and in the latter case, $L_2 \subseteq M \cup M^*$ and so $L_1 \subseteq M$, and so $C_1^{L_1} \in \Phi|_M$, and hence any model of $\Phi|_M$ satisfies $C_2^{L_2}$. 

Since we now have $R \cap M^* = \emptyset$, we have $Lbls(\Phi) \setminus R = M \uplus M^*$. Note that
$M \uplus M^*$ is an LMSS of $\Phi$. Furthermore, since $Lbls(\Phi') = Lbls(\Phi) \setminus M^*$, we have 
$Lbls(\Phi') \setminus R = M$. Thus, in order to prove that $R$ is an LMCS of $\Phi'$, it suffices to show that $M$ is an LMSS of $\Phi'$, given that $M \uplus M^*$ is an LMSS of $\Phi$. This is shown by retracing the steps of the first part in reverse.\qed
\end{proof}

\begin{proposition}\label{prop:lssr}
For any LCNF formula $\Phi$, and any two clauses $C_1^{L_1},C_2^{L_2} \in \Phi$,\\ $\MCS(\sssr(\Phi,C_1^{L_1},C_2^{L_2})) = \MCS(\Phi)$.
\end{proposition}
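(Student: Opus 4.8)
The plan is to follow the same template used in the proofs of Propositions~\ref{prop:lve} and~\ref{prop:lsub}, since self-subsuming resolution is, in effect, an addition of the resolvent $B^{L_2}$ followed by the removal (subsumption) of $C_2^{L_2}$ by the new, strictly stronger clause $B^{L_2}$. Concretely, write $\Phi' = \sssr(\Phi, C_1^{L_1}, C_2^{L_2}) = \Phi \setminus \{C_2^{L_2}\} \cup \{B^{L_2}\}$, where $C_1^{L_1} = (l \lor A)^{L_1}$, $C_2^{L_2} = (\neg l \lor B)^{L_2}$, $A \subset B$, and $L_1 \subseteq L_2$. The crucial observation is that $\Phi$ and $\Phi'$ have the \emph{same} label-set: replacing $C_2^{L_2}$ by $B^{L_2}$ does not disturb any labels, so $Lbls(\Phi') = Lbls(\Phi) = L$. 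This makes the bookkeeping much lighter than in Proposition~\ref{prop:lsub}, where one had to track the set $M^*$ of labels appearing only in the removed clause.

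The key step is a commutation lemma analogous to Lemma~1 of~\cite{DBLP:conf/tacas/BelovJM13}: for any $M \subseteq L$, I claim $\Phi'|_M = \sssr(\Phi|_M, C_1^{L_1}, C_2^{L_2})$ when both $L_1 \subseteq M$ and $L_2 \subseteq M$, and $\Phi'|_M = \Phi|_M$ otherwise. The ``otherwise'' cases split as follows: if $L_2 \not\subseteq M$, then neither $C_2^{L_2}$ nor $B^{L_2}$ survives restriction to $M$, so $\Phi'|_M = (\Phi \setminus \{C_2^{L_2}\})|_M = \Phi|_M$; if $L_2 \subseteq M$ but $L_1 \not\subseteq M$, then $C_1^{L_1} \notin \Phi|_M$, the $\sssr$ step is vacuous on $\Phi|_M$, and again both $\Phi|_M$ and $\Phi'|_M$ contain $C_2^{L_2}$ (resp. its unchanged form) so one checks $\Phi'|_M = \Phi|_M$ — wait, here $\Phi'|_M$ contains $B^{L_2}$ while $\Phi|_M$ contains $C_2^{L_2}$, so these are not literally equal, but they are \emph{equisatisfiable}, which is all that is needed. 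Indeed, since $B \subset C_2 = (\neg l \lor B)$, any model of $B^{L_2}$ satisfies $C_2^{L_2}$; and conversely, in the presence of... no — this direction fails in general, which is precisely why one needs $C_1^{L_1} \in \Phi|_M$. So the honest statement is: when $L_2 \subseteq M$, $\Phi'|_M$ and $\Phi|_M$ are equisatisfiable \emph{provided} $L_1 \subseteq M$ (so that $C_1^{L_1}$ is present to ``recover'' the dropped literal $\neg l$), and I will need to verify that the case $L_2 \subseteq M$, $L_1 \not\subseteq M$ does not actually arise in the MCS argument, or handle it separately.

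With the equisatisfiability facts in hand, the MCS-equality follows exactly as before. For $\MCS(\Phi') \subseteq \MCS(\Phi)$: let $R$ be an MCS of $\Phi'$ with corresponding MSS $M = L \setminus R$. To see $\Phi|_M \in \SAT$: if $L_2 \not\subseteq M$ then $\Phi|_M = \Phi'|_M \in \SAT$; if $L_2 \subseteq M$, then since $L_1 \subseteq L_2 \subseteq M$ we have $C_1^{L_1} \in \Phi'|_M$ and $\Phi|_M$ is obtained from $\Phi'|_M$ by replacing $B^{L_2}$ with the weaker clause $C_2^{L_2}$, hence satisfiable. For the minimality condition, fix $l^* \in R$ and set $M' = M \cup \{l^*\}$; since $M \subset M' \subseteq L$ and $M$ is an MSS of $\Phi'$, $\Phi'|_{M'} \in \UNSAT$; then by the same case analysis ($L_2 \subseteq M'$ forces $L_1 \subseteq M'$ so $C_1^{L_1}$ is available, letting us derive $B^{L_2}$ from $\{C_1^{L_1}, C_2^{L_2}\}$ by resolution, hence $\Phi'|_{M'}$ and $\Phi|_{M'}$ are equisatisfiable) we get $\Phi|_{M'} \in \UNSAT$. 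So $M$ is an MSS of $\Phi$ and $R = L \setminus M$ is an MCS of $\Phi$. The reverse inclusion is symmetric, retracing the steps; here the direction "model of $\{C_1^{L_1}, C_2^{L_2}\}$ restricted appropriately yields model of $B^{L_2}$" is used — note resolution is sound, so any model of $C_1$ and $C_2$ satisfies their resolvent $B$.

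The main obstacle I anticipate is the slightly delicate case $L_2 \subseteq M$ but $L_1 \not\subseteq M$, where $\Phi'|_M \ni B^{L_2}$ but $\Phi|_M \ni C_2^{L_2}$ and the resolvent $B$ is \emph{not} derivable inside $\Phi|_M$ (as $C_1^{L_1}$ is absent). In that situation $\Phi'|_M$ could in principle be strictly stronger (less satisfiable) than $\Phi|_M$. I expect to resolve this by arguing it is harmless for the MCS comparison: since $L_1 \subseteq L_2$, whenever $L_2 \subseteq M$ we \emph{also} have $L_1 \subseteq M$ — so this case simply cannot occur. That single containment $L_1 \subseteq L_2$, built into the definition of the labelled $\sssr$ step, is exactly what makes the whole proof go through, mirroring the role it plays in the MUS-preservation result of~\cite{DBLP:conf/tacas/BelovJM13}.
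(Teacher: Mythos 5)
Your proof is correct and takes essentially the same route as the paper's: since $Lbls(\Phi') = Lbls(\Phi)$ and, because $L_1 \subseteq L_2$ guarantees $C_1^{L_1}$ survives any restriction in which $B^{L_2}$ or $C_2^{L_2}$ survives, one gets $\Phi'|_M \equiv \Phi|_M$ for every $M$ (the paper states this as immediate from $\Phi' \equiv \Phi$ and the unchanged label-set; you justify it explicitly via the two-case analysis and soundness of resolution), whence the MCSes coincide. The mid-proof worry about the case $L_2 \subseteq M$, $L_1 \not\subseteq M$ is moot, as you yourself conclude, since that case cannot arise.
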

\begin{proof}
Assume that $C_1^{L_1} = (l \lor A)^{L_1}$ and $C_2^{L_2} = ({\neg l} \lor B)^{L_2}$ such that $A \subset B$ and $L_1 \subseteq L_2$, and so $\Phi' = \sssr(\Phi,C_1^{L_1},C_2^{L_2}) = \Phi \setminus \{ C_2^{L_2} \} \cup \{ B^{L_2} \}$. The claim is immediate from the fact that since $\Phi' \equiv \Phi$ and $Lbls(\Phi') = Lbls(\Phi)$, for any set of labels $M$, $\Phi'|_M \equiv \Phi|_M$. \qed
\end{proof}

To summarize, the three SAT preprocessing techniques discussed in this section, namely bounded variable elimination, subsumption elimination and self-subsuming resolution, preserve MCSes of LCNF formulas. Given that the MaxSAT problem for weighted CNFs can be cast as a problem of finding a minimum-cost MCS (cf.~Section~\ref{s:prelim}), we now define the MaxSAT problem for weighted LCNFs, and draw a connection between the two problems.
  
\vspace{-10pt}
\subsubsection{Maximum satisfiability for LCNFs}
Recall that the maximum satisfiability problem for a given weighted CNF formula $F = F^H \cup F^S$ can be seen as a problem of finding a minimum-cost set of soft clauses $R_{min}$ whose removal from $F$ makes $F$ satisfiable, i.e. a minimum-cost MCS of $F$. In LCNF framework we do not remove clause directly, but rather via labels associated with them. Thus, a clause labelled with an empty set of labels cannot be removed from an LCNF formula, and can play a role of a hard clause in a WCNF formula. By associating the weights to \emph{labels} of LCNF formula, we can arrive at a concept of a minimum-cost set of labels, and from here at the idea of the maximum satisfiability problem for LCNF formulas.

Thus, we now have \defterm{weighted labels} $(l, w)$, with $l \in Lbls$, and $w \in \N^+$ (note that there's no need for the special weight $\top$). A \defterm{cost} of a set $L$ of weighted labels is the sum of their weights. A \defterm{weighted LCNF formula} is a set of clauses labelled with weighted labels.
It is more convenient to define a MaxSAT solution for weighted LCNFs in terms of minimum-cost MCSes, rather that in terms of MaxSAT models. This is due to the fact that given an arbitrary assignment $\tau$ that satisfies all clauses labelled with $\emptyset$, the definition of a ``set of labels falsified by $\tau$'' is not immediate, since in principle a clause might be labelled with more than one label, and, from the MaxSAT point of view, we do not want to remove more labels than necessary.

\begin{definition}[MaxSAT solution for weighted LCNF]\label{def:lmaxsat}
Let $\Phi$ be a weighted LCNF formula with $\Phi|_{\emptyset} \in \SAT$. An assignment $\tau$ is a \emph{MaxSAT solution} of $\Phi$ if $\tau$ is a model of the formula $\Phi|_{Lbls(\Phi) \setminus R_{min}}$ for some minimum-cost MCS $R_{min}$ of $\Phi$. The cost of $\tau$ is the cost of $R_{min}$.
\end{definition}
In other words, a MaxSAT solution $\tau$ for a weighted LCNF maximizes the cost 
of a set $S \subseteq Lbls(\Phi)$, subject to $\tau$ satisfying $\Phi|_S$, and the cost of $\tau$ is the cost of the set $R = Lbls(\Phi) \setminus S$.

Let $F = F^H \cup F^S$ be a weighted CNF formula. The weighted LCNF formula $\Phi_{F}$ \defterm{associated with $F$} is constructed similary to the case of plain CNFs: assuming that $F^S = \{ C_1, \dots, C_n \}$, we will use $\{1,\dots,n\}$ to label the soft clauses, so that a clause $C_i$ gets a unique, singleton labelset $\{ i \}$, hard clauses will be labelled with $\emptyset$, and the weight of a label $i$ will be set to be the weight of the soft clause $C_i$.  
Formally, $Lbls(\Phi) = \{1,\dots,|F^S|\} \subset \N^+$, $\Phi_{F} = (\cup_{C \in F^H} \{ C^{\emptyset} \}) \cup (\cup_{C_i \in F^S} \{ C_i^{\{i\}} \}$, and $\forall i \in Lbls(\Phi), w(i) = w(C_i)$.

Let $\Phi_F$ be the weighted LCNF formula associated a weighted CNF $F$. Clearly, every MaxSAT solution of $\Phi_F$ is a MaxSAT solution of $F$, and vice versa. In the previous subsection we showed that the resolution and the subsumption elimination based preprocessing techniques preserve the MCSes of $\Phi_F$. We will show  shortly that this leads to the conclusion that the techniques can be applied soundly to $\Phi_F$, and so, assuming the availability of a method for solving MaxSAT problem for $\Phi_F$ (Section~\ref{s:alg}), this allows to use preprocessing, albeit indirectly, for solving MaxSAT problem for $F$.

\vspace{-10pt}
\subsubsection{Preprocessing and MaxSAT for LCNFs}

\begin{theorem}\label{th:rs}
For weighted LCNF formulas, the atomic operations of bounded variable elimination ($\sbve$), subsumption elimination ($\ssub$), and self-subsuming resolution ($\sssr$) sound for MaxSAT.
\end{theorem}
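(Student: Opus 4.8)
The plan is to combine the MCS-preservation results just established (Propositions~\ref{prop:lve}, \ref{prop:lsub} and~\ref{prop:lssr}) with a feasible model-reconstruction step, in the spirit of the proof of Theorem~\ref{th:mon}. Fix one of the three atomic operations and write $\Phi' = \p(\Phi)$. First I would observe that $\p$ preserves not merely the set of MCSes of $\Phi$ but also the set of its \emph{minimum-cost} MCSes: for $\sbve$ and $\sssr$ this is immediate since $Lbls(\Phi') = Lbls(\Phi)$ and the label weights are untouched; for $\ssub$ the only labels that can disappear are those of $M^{*} = Lbls(\Phi)\setminus Lbls(\Phi')$, i.e.\ labels occurring solely in the subsumed clause $C_2^{L_2}$, and the proof of Proposition~\ref{prop:lsub} already shows that every MCS of $\Phi$ is disjoint from $M^{*}$, so the cost of a set of labels is the same whether computed in $\Phi$ or in $\Phi'$. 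Hence any minimum-cost MCS $R_{min}$ of $\Phi'$ is also a minimum-cost MCS of $\Phi$ of the same cost (and $\Phi'|_{\emptyset}\in\SAT$, so the MaxSAT problem for $\Phi'$ is well posed).

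Next, let $\tau$ be a MaxSAT solution of $\Phi'$, so $\tau \models \Phi'|_{S'}$ for $S' = Lbls(\Phi')\setminus R_{min}$ with $R_{min}$ a minimum-cost MCS, and put $S = Lbls(\Phi)\setminus R_{min}$. It remains to produce, in polynomial time, a model $\tau'$ of $Cls(\Phi|_{S})$; by Definition~\ref{def:lmaxsat} and the previous paragraph such a $\tau'$ is then a MaxSAT solution of $\Phi$ of cost $cost(R_{min})$, which equals the cost of $\tau$. The reconstruction is carried out per operation, reusing the commutation facts already in hand. For $\sbve(\cdot,x)$ we have $S' = S$ and, by Lemma~1 of \cite{DBLP:conf/tacas/BelovJM13} (as used in Proposition~\ref{prop:lve}), $\Phi'|_{S} = \sve(\Phi,x)|_{S} = \sve(\Phi|_{S},x)$; thus $\tau$ is a model of $\sve(\Phi|_S,x)$, and the classical Davis--Putnam model reconstruction --- set $x$ to whichever value satisfies every clause of $Cls(\Phi|_S)$ containing $x$ or $\neg x$ (some such value exists, since a falsifying pair would give a resolvent in $\sve(\Phi|_S,x)$ falsified by $\tau$), keeping all other values as in $\tau$ --- yields in linear time a model $\tau'$ of $Cls(\Phi|_S)$. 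For $\ssub(\cdot,C_1^{L_1},C_2^{L_2})$, disjointness $R_{min}\cap M^{*}=\emptyset$ gives $S = S'\uplus M^{*}$, and properties $(p1)$--$(p2)$ from the proof of Proposition~\ref{prop:lsub} give $\Phi|_{S} = \Phi'|_{S'} \cup \{C_2^{L_2}\}|_{S}$; if $L_2 \not\subseteq S$ the second term is empty, while if $L_2 \subseteq S$ then $C_1^{L_1}\in\Phi'|_{S'}$, so $\tau$, satisfying $C_1\subset C_2$, satisfies $C_2$ as well --- in either case $\tau$ (padded with $0$ on the few variables of $\Phi|_S$ it does not mention) is already a model of $Cls(\Phi|_S)$. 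For $\sssr(\cdot,C_1^{L_1},C_2^{L_2})$ we again have $S' = S$ and, as observed in the proof of Proposition~\ref{prop:lssr}, $\Phi'|_{S}$ is logically equivalent to $\Phi|_{S}$; since $B\subset C_2$ in fact $Cls(\Phi'|_S)\models Cls(\Phi|_S)$, so $\tau$ (suitably padded) is again a model of $Cls(\Phi|_S)$.

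Putting this together, in each case the reconstruction function $\alpha_{\p}$ --- essentially the identity for $\ssub$ and $\sssr$, and one extra Davis--Putnam step for $\sbve$ --- is computable in at most linear time and maps any MaxSAT solution of $\p(\Phi)$ to one of $\Phi$, exactly as demanded by Definition~\ref{def:sound}. The one genuinely technical ingredient is the commutation of each preprocessing operation with label-restriction (Lemma~1 of \cite{DBLP:conf/tacas/BelovJM13} for $\sbve$; properties $(p1)$--$(p2)$ for $\ssub$; logical equivalence of induced subformulas for $\sssr$), which is precisely what lets a model of a \emph{restricted preprocessed} formula be transferred back to a model of the \emph{restricted original}; once that is available, both the cost bookkeeping and the model reconstruction are routine. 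The place where I expect the most care to be needed is the label-set bookkeeping for $\ssub$ (tracking the set $M^{*}$ of labels that vanish), but Proposition~\ref{prop:lsub} has already done the bulk of that work.
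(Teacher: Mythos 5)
Your proposal is correct and follows essentially the same route as the paper's proof: invoke Propositions~\ref{prop:lve}--\ref{prop:lssr} to transfer the minimum-cost MCS, use the standard BVE (Davis--Putnam) model reconstruction for $\sbve$, and note that for $\ssub$ and $\sssr$ the identity map suffices because the subsuming clause (resp.\ the strengthened clause) keeps the induced subformulas logically equivalent. You simply spell out the cost bookkeeping and the label-restriction commutation in more detail than the paper does.
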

\begin{proof}
Let $\Phi$ be a weighted LCNF formula. Assume that for some variable $x$, $\Phi' = \sbve(\Phi,x)$, and let $\tau'$ be a MaxSAT solution of $\Phi'$. Thus, for some minimum-cost MCS $R_{min}$ of $\Phi'$, $\tau'$ is a model of $\Phi'|_{Lbls(\Phi') \setminus R_{min}}$. By Proposition~\ref{prop:lve}, $R_{min}$ is a minimum-cost MCS of $\Phi$. If $x$ was eliminated, $\tau'$ can be transformed in linear time to a model $\tau$ of $\Phi|_{Lbls(\Phi) \setminus R_{min}}$ by assigning the truth-value to $x$ (cf.~\cite{DBLP:conf/sat/JarvisaloB10}). We conclude that $\sbve$ is sound for LCNF MaxSAT.

For $\ssub$ and $\sssr$ no reconstruction is required, since the techniques preserve equivalence. The claim of the theorem follows directly from Propositions~\ref{prop:lsub} and \ref{prop:lssr}. \qed
\end{proof}

\medskip

\noindent
To conclude this section, lets us summarize the SAT preprocessing ``pipeline'' for solving the MaxSAT problem for weighted CNFs. Given a WCNF formula $F$, first apply any MUS-preserving (and so, monotone) clause-elimination technique, such as BCE, to obtain the formula $F'$. Then, construct an LCNF formula $\Phi|_{F'}$ associated with $F'$, and apply BVE, subsumption elimination and SSR, possibly in an interleaved manner, to $\Phi|_{F'}$ to obtain $\Phi'$. Solve the MaxSAT problem for $\Phi'$, and reconstruct the solution to the MaxSAT problem of the original formula $F$ --- Theorems~\ref{th:mon} and \ref{th:rs} show that it can be done feasibly. The only missing piece is how to solve MaxSAT problem for LCNF formulas --- this is the subject of the next section.

We have to point out that the resolution and the subsumption elimination preprocessing techniques in the LCNF framework are not without their limitations. For $\bve$ the label-sets of clauses grow, which may have a negative impact on the performance of SAT solvers if LCNF algorithms are implemented incrementally. Also, two clauses $C^{L_1}$ and $C^{L_2}$ are treated as two different clauses if $L_1 \ne L_2$, while without labels they would be collapsed into one, and thus more variables might be eliminated. Nevertheless, when many hard (i.e. labelled with $\emptyset$) clauses are present, this negative effect is dampened. For subsumption elimination the rule $L_1 \subseteq L_2$ is quite restrictive. In particular, it blocks subsumption completely in the plain MaxSAT setting (though, as we already saw, unrestricted subsumption is dangerous for MaxSAT). However, in partial MaxSAT setting it does enable the removal of any clause (hard or soft) subsumed by a hard clause. 
In Section~\ref{s:exp}, we demonstrate that the techniques do lead to performance improvements in practice.

\section{Solving MaxSAT problem for LCNFs}\label{s:alg}

In this section we propose two methods for solving MaxSAT problem for weighted LCNFs.
Both methods rely on the connection between the labels in LCNFs and the selector variables.

\vspace{-10pt}
\subsection{Reduction to weighted partial MaxSAT}\label{s:map}

The idea of this method is to encode a given weighted LCNF formula $\Phi$ as an WCNF formula $F_{\Phi}$, mapping the labels of $\Phi$ to soft clauses in such a way that a removal of soft clause from $F_{\Phi}$ would emulate the operation of a removal of a corresponding label from $\Phi$. This is done in the following way: for each $l_i \in Lbls(\Phi)$, create a new variable $a_i$. Then, for each labelled clause $C^L$ create a \emph{hard} clause $C \lor \bigvee_{l_i \in L} (\neg a_i)$. Finally, for each $l_i \in Lbls(\Phi)$, create a \emph{soft} clause $(a_i)$ with a weight equal to the weight of the label $l_i$. 

\begin{example}
Let $\Phi = \{ (\neg p)^{\emptyset}, (r)^{\emptyset}, (p \lor q)^{\{1\}}, (p \lor \neg q)^{\{1,2\}}, (p)^{\{2\}}, (\neg r)^{\{3\}} \}$, and assume that the weights of all labels are 1. Then, $F_{\Phi} = \{ (\neg p, \top), (r, \top), (\neg a_1 \lor p \lor q, \top), (\neg a_1 \lor \neg a_2 \lor p \lor \neg q, \top), (\neg a_2 \lor p, \top), (\neg a_3 \lor \neg r, \top), (a_1, 1), (a_2, 1), (a_3, 1) \}$. Then, removal of $(a_2, 1)$ from the $F_{\Phi}$ leaves $\neg a_2$ pure, and so is equivalent to the removal of all hard clauses clauses that contain $a_2$, which in turn is equivalent to the removal of the label 2 from $\Phi$.
\end{example}

It is then not difficult to see that any MaxSAT solution of $F_{\Phi}$ is a MaxSAT solution of $\Phi$, and vice versa. The advantage of the indirect method is that any off-the-shelf MaxSAT solver can be turned into a MaxSAT solver for LCNFs. However, it also creates a level of indirection between the selector variables and the clauses they are used in. In our preliminary experiments the indirect method did not perform well.

\vspace{-10pt}
\subsection{Direct computation}\label{s:direct}

Core-guided MaxSAT algorithms are among the strongest algorithms for industrially-relevant MaxSAT problems. These algorithms iteratively invoke a SAT solver, and for each unsatisfiable outcome, \defterm{relax} the clauses that appear in the unsatisfiable core returned by the SAT solver. A clause $C_i$ is \defterm{relaxed} by adding a literal $r_i$ to $C_i$ for a fresh \defterm{relaxation variable} $r_i$. Subsequently, a cardinality or a pseudo-Boolean constraint over the relaxation variables $r_i$ is added to the set of the hard clauses of the formula. The exact mechanism is algorithm-dependent --- we refer the reader to the recent survey of core-guided MaxSAT algorithms in \cite{mhlpms-cj13}.

The key idea that enables to adapt core-guided MaxSAT algorithms to the LCNF setting is that the ``first-class citizen'' in the context of LCNF is not a clause, but rather a \emph{label}. In particular, the unsatisfiable core returned by a SAT solver has to be expressed in terms of the labels of the clauses that appear in the core. Furthermore, in the LCNF setting, it is the labels that get relaxed, and not the clauses directly. That is, when a label $l_i$ is relaxed due to the fact that it appeared in an unsatisfiable core, the relaxation variable $r_i$ is added to all clauses whose labelsets include $l_i$.

\begin{figure}[t]
  \begin{minipage}{0.49\textwidth}\vspace{-15pt}
    {\small \input{algs/fumal}}
    \caption{Fu and Malik algorithm for partial MaxSAT \cite{DBLP:conf/sat/FuM06}}\label{alg:fumal}
  \end{minipage}
  \begin{minipage}{0.02\textwidth}
    \hfill
  \end{minipage}
  \begin{minipage}{0.49\textwidth}
    {\small \input{algs/lfumal}}
    \caption{(Unweighted) LCNF version of Fu and Malik algorithm}\label{alg:lfumal}
  \end{minipage}
  \vspace{-15pt}
\end{figure}

To illustrate the idea consider the pseudocode of a core-guided algorithm for solving partial MaxSAT problem due to Fu and Malik \cite{DBLP:conf/sat/FuM06}, presented in Figure~\ref{alg:fumal}. And, contrast it with the (unweighted) LCNF-based version of the algorithm, presented in Figure~\ref{alg:lfumal}. 
The original algorithm invokes a SAT solver on the, initially input, formula $F$ until the formula is satisfiable. For each unsatisfiable outcome, the soft clauses that appear in the unsatisfiable core $Core$ (assumed to be returned by the SAT solver) are relaxed (lines~5-7), and the CNF representation of the $equals1$ constraint on the sum of relaxation variables is added to the set of the hard clauses of $F$. 
The LCNF version of the algorithm proceeds similarly. The only two differences are as follows. When the LCNF formula $\Phi$ is unsatisfiable, the unsatisfiable core has to be expressed in terms of the labels, rather than clauses. That is, the algorithm expects to receive a set $L_{core} \subseteq Lbls(\Phi)$ such that $\Phi|_{L_{core}} \in \UNSAT$. Some of the possible ways to obtain such a set of \defterm{core labels} are described shortly. The second difference is that a fresh relaxation variable $r_i$ is associated with each core label $l_i$, rather than with each clause as in the original algorithm. Each core label $l_i$ is relaxed by replacing each clause $C^L$ such that $l_i \in L$ with $(r_i \lor C)^L$ (lines~7-8). Note that in principle $C^L$ may include more than one core label, and so may receive more than relaxation variable in each iteration of the algorithm. The nested loop on lines~5-8 of the algorithm can be replaced by a single loop iterating over all clauses $C^L$ such that $L \cap L_{core} \ne \emptyset$. Finally, the clauses of the CNF representation of the $equals1$ constraint are labelled with $\emptyset$, and added to $\Phi$.

One of the possible ways to obtain the set of core labels is to use a standard core-producing SAT solver. One can use either a proof-tracing SAT solver, such as PicoSAT \cite{bierre-jsat08}, that extracts the core from the trace, or an assumption-based SAT solver, that extracts the core from the final conflict clause. Then, to check the satisfiability of $\Phi$, the clause-set $Cls(\Phi)$ of $\Phi$ is passed to a SAT solver, and given an unsatisfiable core $Core \subseteq Cls(\Phi)$, the set of core labels is obtained by taking a union of the labels of clauses that appear in $Core$. Regardless of the type of the SAT solver, the solver is invoked in \emph{non-incremental} fashion, i.e. on each iteration of the main loop a new instance of a SAT solver is created, and the clauses $Cls(\Phi)$ are passed to it. It is worth to point out that the majority of SAT-based MaxSAT solvers use SAT solvers in such non-incremental fashion. Also, it is commonly accepted that proof-tracing SAT solvers are superior to the assumption-based in the MaxSAT setting, since a large number of assumption literals tend to slow down SAT solving, while, at the same time, the incremental features of assumption-based solvers are not used.
 
An alternative to the non-incremental use of SAT solvers in our setting is to take advantage of the incremental features of the assumption-based SAT solvers. While we already explained that labels in LCNFs can be seen naturally as selectors in the assumption-based incremental SAT, the tricky issue is to emulate the operation of relaxing a clause, i.e. adding one or more relaxation variables to it. The only option in the incremental SAT setting is to ``remove'' the original clause by adding a unit clause $(\neg s)$ to the SAT solver for some selector literal $\neg s$, and add a relaxed version of the clause instead. The key observation here is that since the labels are already represented by selector variables, we can use \emph{these} selector variables to both to remove clauses and to keep track of the core labels. For this, each label $l_i \in Lbls(\Phi)$ is associated with a \emph{sequence} of selector variables $a_i^0, a_i^1, a_i^2, \dots$. At the beginning, just like in the reduction described in Section~\ref{s:map}, for each $C^L$ we load a clause $C' = C \lor \bigvee_{l_i \in L} (\neg a_i^0)$ into the SAT solver, and solve under assumptions $\{a_1^0, a_2^0, \dots\}$. The selectors that appear in the final conflict clause of the SAT solver will map to the set of the core labels $L_{core}$. Assume now that a label $l_c \in L$ is a core label, i.e. the selector $a_c^0$ was in the final conflict clause. And, for simplicity, assume that $l_c$ is the only core label in $L$. Now, to emulate the relaxation of the clause $C'$, we first add a unit clause $(\neg a_c^0)$ to the SAT solver to ``remove'' $C'$, and then add a clause $C'' = (C' \setminus \{ \neg a_c^0 \}) \cup \{ r, \neg a_c^1 \}$, where $r$ is the relaxation variable associated with $l_c$ in this iteration, and $a_c^1$ is a ``new version'' of a selector variable for $l_c$. If on some iteration $a_c^1$ appears in the final conflict clause, we will know that $l_c$ is a core label that needs to be relaxed, add $(\neg a_c^1)$ to the SAT solver, and create yet another version $a_c^2$ of a selector variable for the label $l_c$. For MaxSAT algorithms that relax each clause at most once (e.g. WMSU3 and BCD2, cf. \cite{mhlpms-cj13}), we only need two versions of selectors for each label.

Note that since, as explained in Section~\ref{s:premax}, MaxSAT problem for WCNF $F$ can be recast as a MaxSAT problem for the associated LCNF $\Phi_F$, the incremental-SAT based MaxSAT algorithms for LCNFs can be seen as incremental-SAT based MaxSAT algorithm for WCNFs --- to our knowledge such algorithms have not been previously described in the literature. 
The main advantage of using the SAT solver incrementally, beside the saving from re-loading the whole formula in each iteration of a MaxSAT algorithm, is in the possible reuse of the learned clauses between the iterations. While many of the clauses learned from the soft clauses will not be reused (since they would also need to be relaxed, otherwise), the clauses learned from the hard clauses will. In our experiments (see next section) we did observe gains from incrementality on instances of weighted partial MaxSAT problem.


\section{Experimental Evaluation}\label{s:exp}

To evaluate the ideas discussed in this paper empirically, we implemented an LCNF-based version of the MaxSAT algorithm WMSU1 \cite{DBLP:conf/sat/FuM06,DBLP:conf/sat/AnsoteguiBL09,DBLP:conf/sat/ManquinhoSP09}, which is an extension of Fu and Malik's algorithm discussed in Section~\ref{s:direct} to the weighted partial MaxSAT case. Note that none of the important optimizations discussed in \cite{DBLP:conf/sat/ManquinhoSP09} were employed. The algorithm was implemented in both the non-incremental and the incremental settings, and was evaluated on
the set of industrial benchmarks from the MaxSAT Evaluation
2013\footnote{http://maxsat.ia.udl.cat/}, a total of 1079 instances.
The experiments were performed on an HPC cluster, 
with quad-core Intel Xeon E5450 3 GHz nodes with 32 GB of memory.  
All tools were run with a timeout of 1800 seconds and a memory limit of 4 GB
per input instance.

In the experiments PicoSAT~\cite{bierre-jsat08} and Lingeling~\cite{lingeling} were 
used as the underlying SAT solvers.
For (pure) MaxSAT benchmarks, we used PicoSAT (v.~935),
while for partial and weighted partial MaxSAT instances we used PicoSAT (v.~954) --- 
the difference between versions is due to better performance in the preliminary experiments.
Both incremental (P) and non-incremental proof-tracing (P\_NI) settings for PicoSAT were
tested. For Lingeling (v.~ala) the incremental mode (L) was tested.

For the preprocessing, we implemented our own version of
Blocked Clause Elimination (BCE), while for Resolution and Subsumption (RS) both
SatElite~\cite{een-sat05} and Lingeling~\cite{lingeling} as a preprocessor were
used.
We have included in the experiments WMSU1 algorithm from
MSUnCore~\cite{DBLP:conf/sat/ManquinhoSP09} in order to establish a reasonable baseline.
  
\begin{figure}[t]
  \begin{tabular}{cc}
    \includegraphics[width=0.5\textwidth]{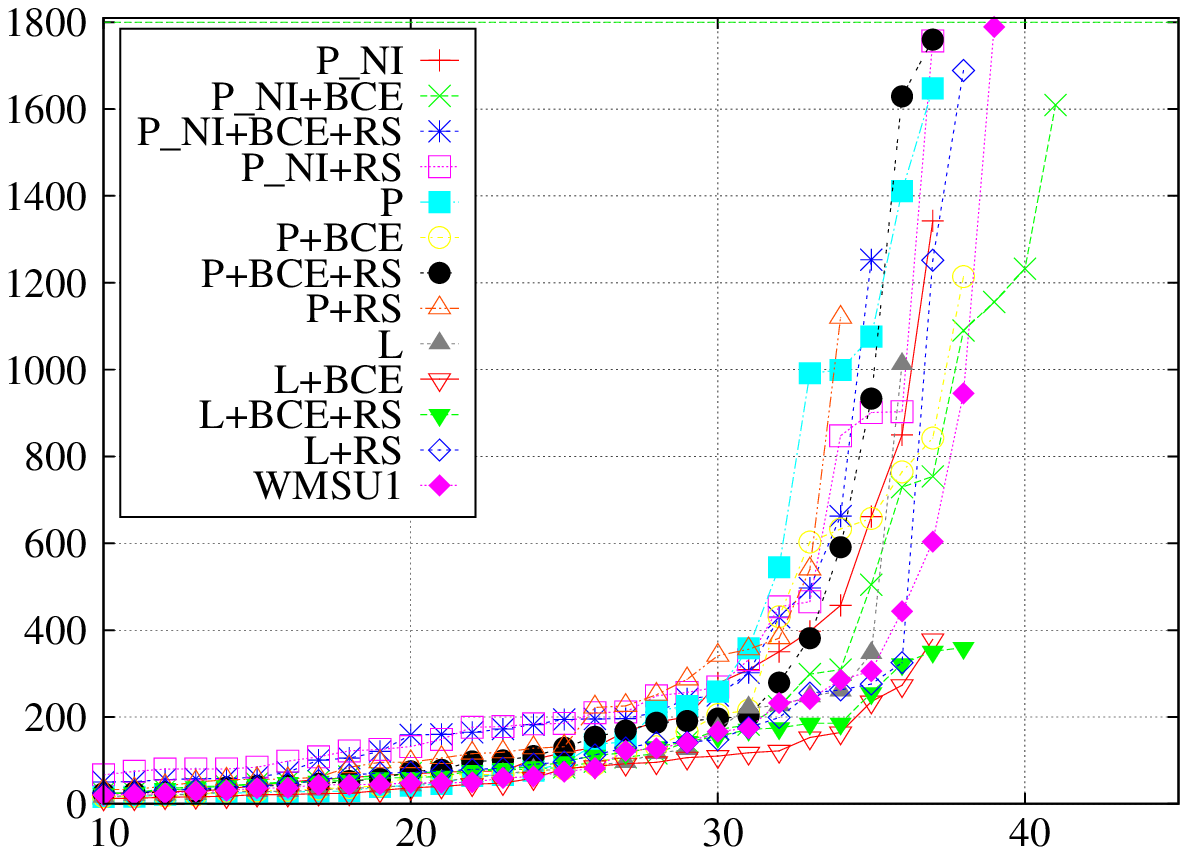}
	&
	\includegraphics[width=0.5\textwidth]{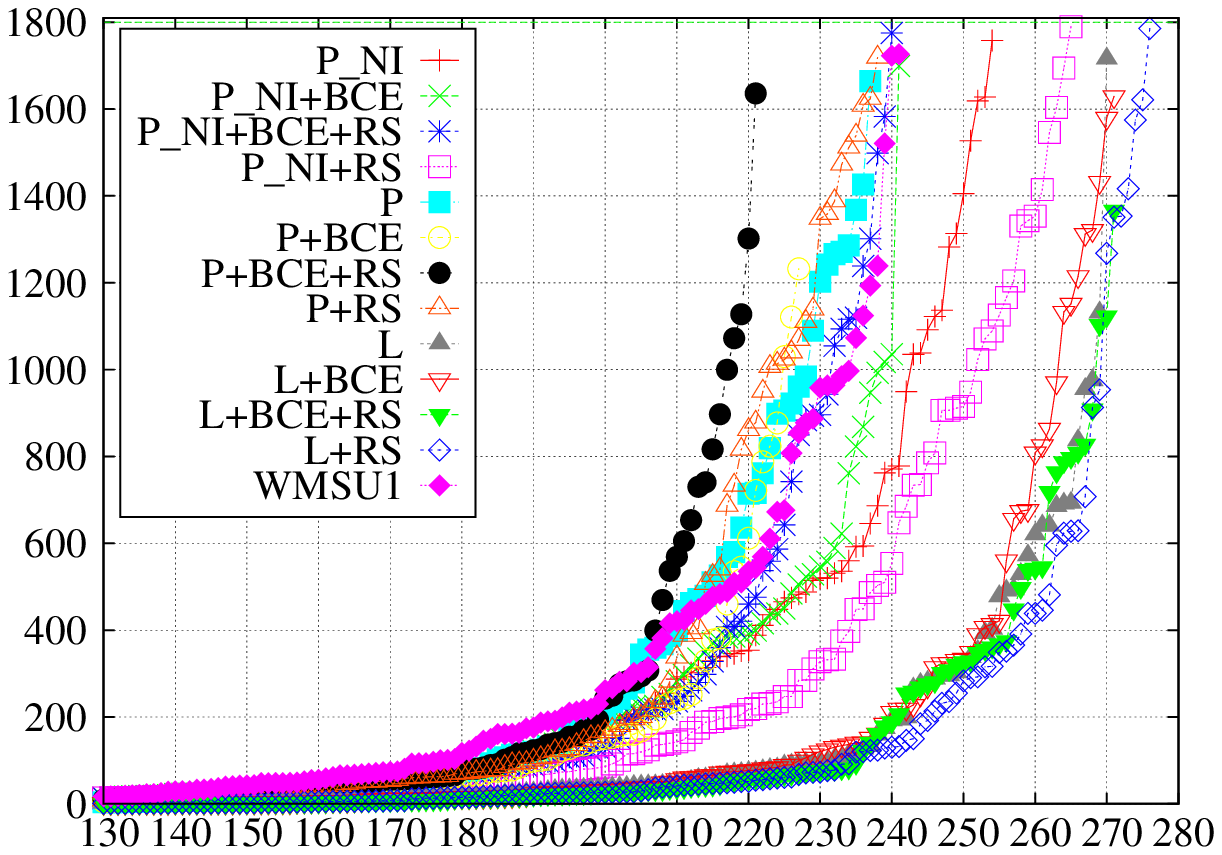} 
	\\
	(a) MaxSAT 
	&
	(b) Partial MaxSAT 
	\\
	\includegraphics[width=0.5\textwidth]{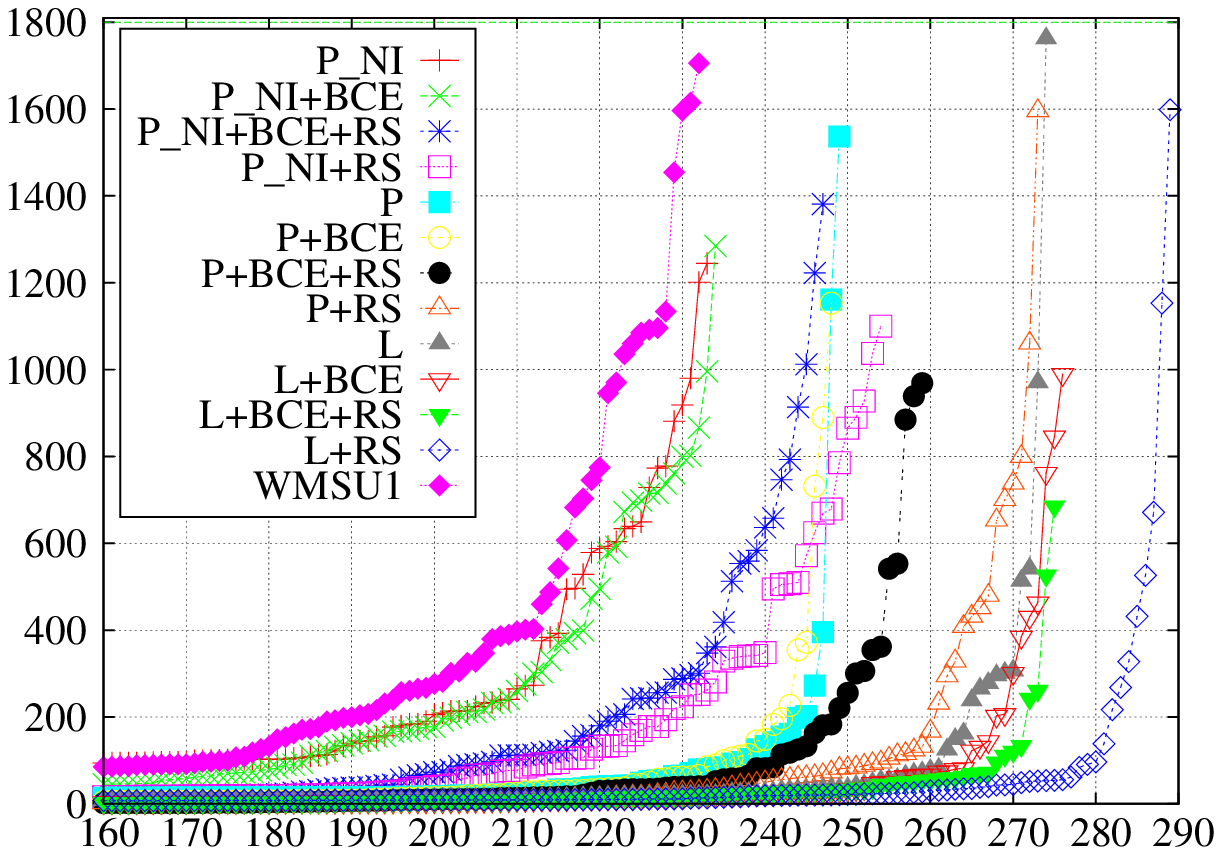} 
	&
 	\includegraphics[width=0.5\textwidth]{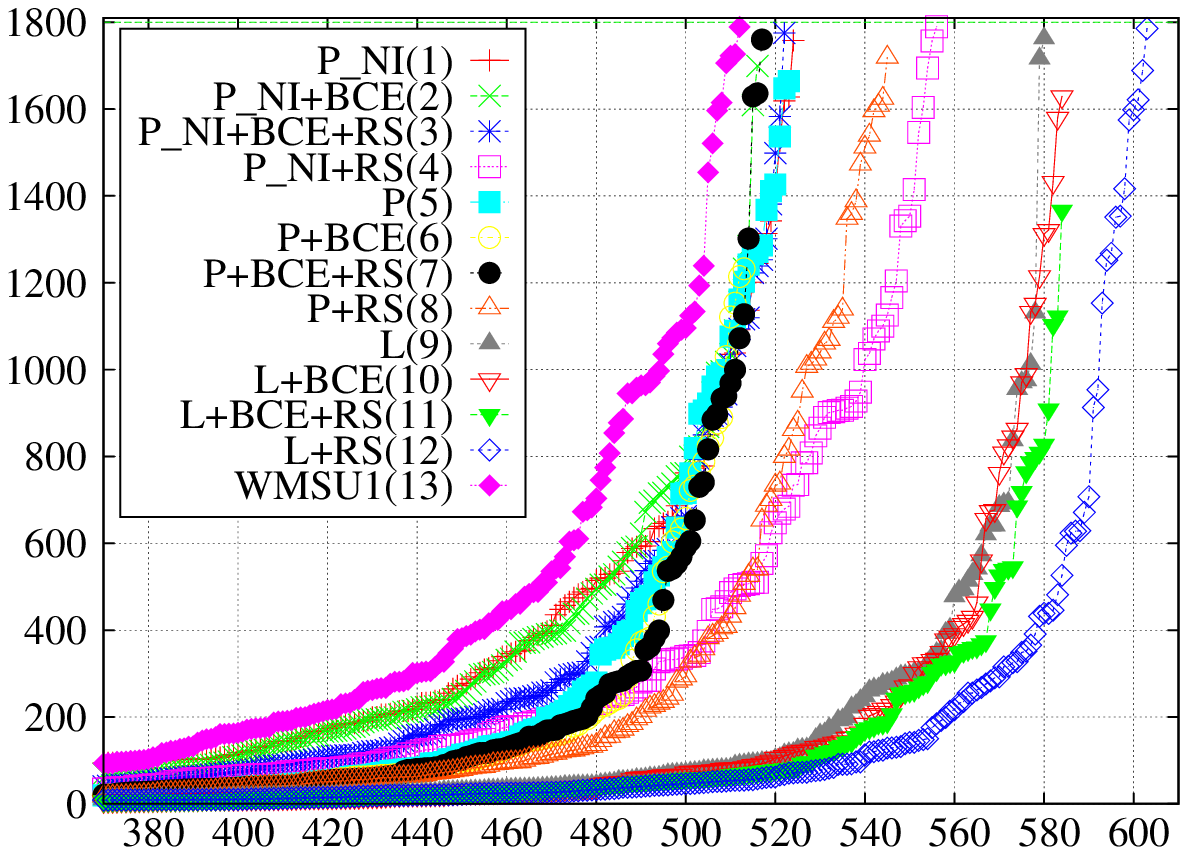} 
 	\\
 	(c) Weighted Partial MaxSAT 
 	&
 	(d) All
 	\end{tabular}
 	\caption{Cactus plots for the different categories.}
 	\label{fig:cactus}
        \vspace{-20pt}
\end{figure}

Figure~\ref{fig:cactus} shows the results for different classes of
industrial \mxsat instances, while Table~\ref{tbl:solved} complements it by
showing the number of solved instances by each configuration/solver, and the
average CPU time taken on the solved instances.
From the figure and the table, the following conclusions can be drawn. 
%
%
First, we note that the resolution and subsumption elimination based preprocessing (RS) is, in general, 
quite effective. In fact, for each of the solvers, within the same solver, the configuration
that outperforms all others is RS, except for plain \mxsat instances with PicoSAT.
Also L$+$RS solves the highest number of instances overall, as revealed in
Figure~\ref{fig:cactus}~(d). 
Regarding the blocked clause elimination (BCE), the technique is effective for plain \mxsat
instances, however not for other classes of instances.  
Notice that the combination of BCE$+$RS never improves over the
best of the techniques considered separately, being only equal with
Lingeling for (pure) \mxsat instances.

\begin{table}[t]
  \begin{center}
	\begin{tabular}{|l|c|c|c|c|c|c|c|c|}
\hline
&\multicolumn{2}{c|}{All}
&\multicolumn{2}{c|}{MaxSAT}
&\multicolumn{2}{c|}{Partial MaxSAT}
&\multicolumn{2}{c|}{Weighted Partial MaxSAT}
\\
&\#Sol.&A.CPU&\#Sol.&A.CPU&\#Sol.&A.CPU&\#Sol.&A.CPU\\
\hline
Instances&1079& &55& &627& &397&\\
\hline
P\_NI&524&144.29&37&172.76&254&152.04&233&131.32 \\
P\_NI$+$BCE&516&115.84&{\bf 41}&237.58&241&105.02&234&105.65 \\ 
P\_NI$+$BCE$+$RS&522&103.08&35&177.37&240&120.70&247&75.42 \\
P\_NI$+$RS&556&124.48&37&246.68&265&154.84&254&75.00 \\
\hline
P&523&91.81&37&236.26&237&132.83&249&31.31 \\
P$+$BCE&513&57.70&38&180.22&227&70.08&248&27.60 \\
P$+$BCE+RS&517&67.61&37&209.48&221&85.36&259&32.19 \\
P$+$RS&545&93.71&34&151.77&238&146.93&273&40.08 \\
\hline
L&580&55.93&36&101.92&270&75.45&274&30.64 \\
L$+$BCE&584&60.84&37&67.88&271&95.89&276&25.49 \\
L$+$BCE$+$RS&584&48.03&38&96.02&271&73.90&275&15.90 \\
L$+$RS&{\bf 603}&65.26&38&161.71&{\bf 276}&91.15&{\bf 289}&27.85 \\
\hline
WMSU1&512&157.68&39&165.64&241&149.01&232&165.35 \\
\hline
\end{tabular}
  \end{center}
  \caption{Table of solved instances and average CPU times}
  \label{tbl:solved}
  \vspace{-20pt}
\end{table}

Somewhat surprisingly, our results suggest that, in contrast with
standard practice (i.e.\ most \mxsat solvers are based on
non-incremental SAT), the incremental SAT solving can be effective for
some classes of \mxsat instances.
Namely for Weighted Partial \mxsat instances, where for example PicoSAT
incremental (P) solves 16 more instances than PicoSAT non-incremental (P\_NI)
with a much lower average CPU time on the solved instances.

Finally, comparing the underlying SAT solvers used, it can be seen that in our 
experiments Lingeling performs significantly better than PicoSAT, which, as our
additional experiments suggest, is in turn is much better SAT solver than 
Minisat \cite{DBLP:conf/sat/EenS03}, for MaxSAT problems.


\section{Conclusion}\label{s:conc}

In this paper we investigate the issue of sound application of SAT preprocessing techniques for solving the MaxSAT problem. To our knowledge, this is the first work that addresses this question directly. 
We showed that monotone clause elimination procedures, such as BCE, can be applied soundly on the input formula. We also showed that the resolution and subsumption elimination based techniques can be applied, although indirectly, through the labelled-CNF framework. 
Our experimental results suggest that BCE can be effective on (plain) MaxSAT problems, and that the LCNF-based resolution and subsumption elimination leads to performance boost in partial and weighted partial MaxSAT setting.
Additionally, we touched on an issue of the incremental use of assumption-based SAT solvers in the MaxSAT setting, and showed encouraging results on weighted partial MaxSAT problems.
In the future work we intend to investigate issues related to the sound application of additional SAT preprocessing techniques.

\medskip
\noindent{\bf Acknowledgements} We thank the anonymous referees for their comments and suggestions.

\bibliography{paper}
\bibliographystyle{plain}


\end{document}